\documentclass[11pt,a4paper]{article}
\usepackage[hyperref]{acl2020}
\usepackage{xr}

\usepackage{graphicx}
\usepackage{xcolor}
\definecolor{color1}{rgb}{0.4, 0., 0.} 
\definecolor{color2}{rgb}{0.6, 0.4, 1} 

\usepackage{times}
\usepackage{latexsym}
\usepackage[utf8]{inputenc}
\usepackage{xcolor}
\usepackage{linguex}
\usepackage{booktabs}
\usepackage{multirow} % for the tables.
\usepackage{multicol}
\usepackage{url}
\usepackage{amsmath}
\usepackage{amssymb}
\usepackage{algorithm}
\usepackage[algo2e, ruled]{algorithm2e} 
\usepackage{tabularx}
\usepackage{amsthm}

\usepackage{esvect}
\usepackage{array}
\usepackage[export]{adjustbox}
\usepackage{subfig}
\usepackage{adjustbox}
\usepackage{fancyhdr}

\usepackage[margin=1in,headsep=.2in]{geometry}

\newcommand{\R}{\mathbb{R}}

\setlength{\belowcaptionskip}{-10pt}

\makeatletter
\newcommand*{\addFileDependency}[1]{% argument=file name and extension
  \typeout{(#1)}
  \@addtofilelist{#1}
  \IfFileExists{#1}{}{\typeout{No file #1.}}
}
\makeatother

%\myexternaldocument{appendix}

% This is not strictly necessary, and may be commented out,
% but it will improve the layout of the manuscript,
% and will typically save some space.
\usepackage{microtype}

 % Uncomment this line for the final submission
%\def\aclpaperid{***} %  Enter the acl Paper ID here

%\setlength\titlebox{5cm}
% You can expand the titlebox if you need extra space
% to show all the authors. Please do not make the titlebox
% smaller than 5cm (the original size); we will check this
% in the camera-ready version and ask you to change it back.

\newtheorem{theorem}{Theorem}[section]
\newtheorem{corollary}{Corollary}[theorem]
\newtheorem{lemma}[theorem]{Lemma}

\usepackage{adjustbox} % resizing tables
%\pagestyle{fancy}
%\fancyhead[L,C]{}
%\lhead{Accepted as a long paper in NAACL 2019}

%\renewcommand{\firstrefdash}{}
 %  Enter the acl Paper ID here

%\newcommand{\yg}[1]{\textcolor{purple}{[YG: #1]}}
      
% \aclfinalcopy

%\title{Null It Out! Debiasing First Order Features From Decision Representations}
%\thispagestyle
% \head{}

\aclfinalcopy

\title{Null It Out: Guarding Protected Attributes by Iterative Nullspace Projection}

\author{Shauli Ravfogel\textsuperscript{1,2} \;\;\; Yanai Elazar\textsuperscript{1,2} \;\;\; Hila Gonen\textsuperscript{1}  \;\;\; Michael Twiton\textsuperscript{3}  \;\;\; Yoav Goldberg\textsuperscript{1,2}\\
\textsuperscript{1}Computer Science Department, Bar Ilan University \\
\textsuperscript{2}Allen Institute for Artificial Intelligence \\
\textsuperscript{3}Independent researcher \\
  {\tt  \{shauli.ravfogel, yanaiela, hilagnn, mtwito101, yoav.goldberg\}@gmail.com}
  }

% \author{Shauli Ravfogel\textsuperscript{1} \;\;\; Yoav Goldberg\textsuperscript{1,2} \;\;\; Tal Linzen\textsuperscript{3} \\
% \textsuperscript{1}Computer Science Department, Bar Ilan University \\
% \textsuperscript{2}Allen Institute for Artificial Intelligence \\
% \textsuperscript{3}Department of Cognitive Science, Johns Hopkins University \\
%   {\tt \{shauli.ravfogel, yoav.goldberg\}@gmail.com, tal.linzen@jhu.edu}
%   }

%\renewcommand{\yg}[1]{}
%\renewcommand{\sr}[2]{}
%\renewcommand{\tl}[3]{}

\date{}

\begin{document}

%\pagestyle{fancy}
%\fancyhead[L,C]{}
%\lhead{Accepted as a long paper in ACL 2020}

\maketitle

%\thispagestyle{fancy}
%\yg{should figure out title. I am not too happy about either the original suggestion nor my own.}
%\ye{if we're going on the general approach, it's not restricted to protected attributes necessarily.}
%\sr{what about something on the line of ``learning to forget" or ``projecting to forget"?}\yg{I am not very fond of "cute" titles like that and prefer informative ones, but I also won't block it.}

\begin{abstract}

The ability to control for the kinds of information encoded in neural representation has a variety of use cases, especially in light of the challenge of interpreting these models. 
We present Iterative Null-space Projection (INLP), a novel method for removing information from neural representations. Our method is based on repeated training of linear classifiers that predict a certain property we aim to remove, followed by projection of the representations on their null-space. By doing so, the classifiers become oblivious to that target property, making it hard to linearly separate the data according to it.
While applicable for multiple uses, we evaluate our method on bias and fairness use-cases, and show that our method is able to mitigate bias in word embeddings, as well as to increase fairness in a setting of multi-class classification.

%XXX deepmoji as well?

\end{abstract}

\section{Introduction}

What is encoded in vector representations of textual data, and can we control
it?
Word embeddings, pre-trained language models, and more generally deep learning
methods emerge as very
effective techniques for text classification. Accordingly, they are increasingly
being used for predictions in real-world situations.
A large part of the success is due to the
models' ability to perform \emph{representation learning}, coming up with
effective feature representations for the prediction task at hand. However,
these learned representations, while effective, are also notoriously opaque: we do not
know what is encoded in them.  Indeed, there is an emerging line of work on
probing deep-learning derived representations for syntactic \cite{linzen2016agreement,structural-probe,goldberg2019assessing}, semantic \cite{tenney2019bert} and factual knowledge \cite{petroni2019language}. There is also evidence that they capture a lot of
information regarding the demographics of the author of the text \cite{blodgett2016demographic,elazar2018}.

\begin{figure}[t]
    \centering
    \includegraphics[width=1.\columnwidth]{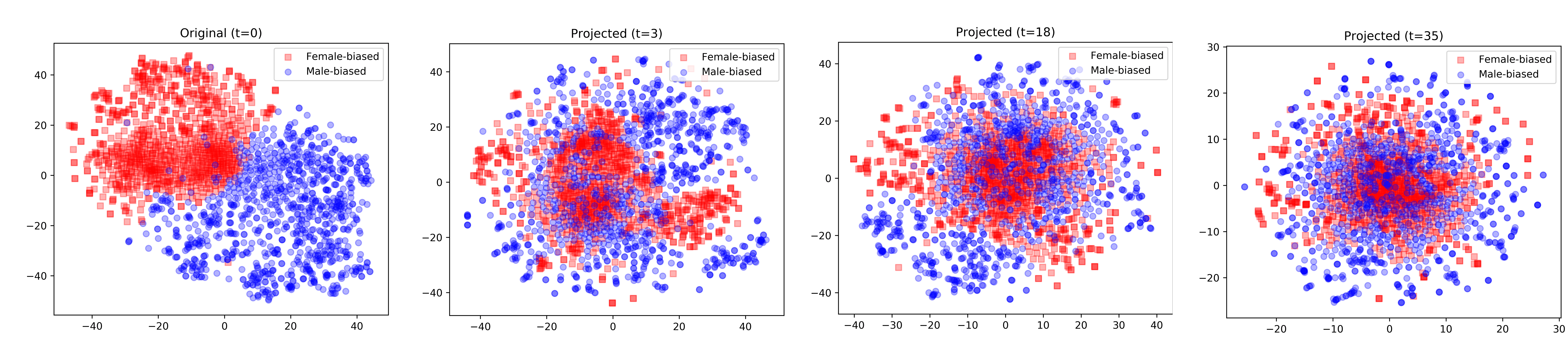}
    \caption{t-SNE projection of GloVe vectors of the most gender-biased words after t=0, 3, 18, and 35 iterations of INLP. Words are colored according to being male-biased or female-biased.}
    \label{fig:tsne}
\end{figure}
% \vspace{-0.5cm}

What can we do in situations where we \emph{do not want} our representations to encode
certain kinds of information? For example, we may want a word representation
that does not take \emph{tense} into account, or that does not encode
\emph{part-of-speech} distinctions. We may want a classifier that judges the
\emph{formality} of the text, but which is also oblivious to the \emph{topic}
the text was taken from. Finally, and also our empirical focus in this work,
this situation often arises when considering \emph{fairness} and \emph{bias} of
language-based classification. We may not want our word-embeddings to encode \emph{gender}
stereotypes, and we do not want sensitive decisions on hiring or loan approvals
to condition on the \emph{race}, \emph{gender} or \emph{age} of the applicant.

We present a novel method for selectively removing specific kinds of information
from a representation. Previous methods are either
based on projection on a pre-specified, user-provided direction \cite{bolukbasi2016man},
or on adding an adversarial objective to an end-to-end training process \cite{xie2017controllable}.
Both of these have benefits and limitations, as we discuss in the related work section
(\S{\ref{sec:related-work}}). Our proposed method, Iterative Null-space Projection
(INLP), presented in section \ref{sec:method}, can be seen as a combination of these
approaches, capitalizing on the benefits of both. 
Like the projection methods, it is also based on the mathematical notion
of linear projection, a commonly used deterministic operator. Like the adversarial methods,
it is data-driven in the directions it removes: we do not presuppose specific directions in the latent space that correspond to the protected attribute, but rather \emph{learn} those directions, and remove them. Empirically, we find it to work well.
We evaluate the method on the challenging task of removing gender signals from
word embeddings \cite{bolukbasi2016man,gn-glove}. Recently, \citet{gonen2019lipstick} showed several limitations
of current methods for this task.  We show that our method is effective in reducing many,
but not all, of these (\S\ref{sec:debias-word-vectors}).

We also consider the context of fair classification, where we want to ensure
that a classifier's decision is oblivious to a protected attribute such as race,
gender or age. There, we need to integrate the projection-based method within a pre-trained classifier. We
propose a method to do so in section \S\ref{sec:deep-debiasing}, and demonstrate
its effectiveness in a controlled setup (\S\ref{sec:deepmoji}) as well as in a real-world one (\S\ref{sec:bios}). %on a controlled d recent dataset of biographies that 
%was shown to contain gender biases \cite{}. 
%We show that we are able to substantially reduce a lot of them.

Finally, while we propose a general purpose information-removal method, our main
evaluation is in the realm of bias and fairness applications. We stress that
this calls for some stricter scrutiny, as the effects of blindly trusting strong
claims can have severe real-world consequences on individuals. We discuss
the limitations of our model in the context of such applications in section
\S\ref{sec:limitations}.

\section{Related Work}
\label{sec:related-work}

% \yg{I think this section should focus on techniques for removing information, not on bias and fairness (which can be ``backgournd'', if we want, not ``related work''). In terms of content, I think this should include: How to remove bias from vector spaces? The methods choose a direction and project against it. The direction is chosen by... This is problematic because.
% Next: Bias removal through training objective. This is often achieved with adversarial losses. How do these work? What are the issues? Elazar and Goldberg show they are not really removing the biases. Characteristics of these methods: end-to-end objective, two competing objective, in a hard-to-optimize min-max problem. Unstable optimization. Need to have many adversaries to be effective (via Elazar and Goldberg), but then things are not stable. However, they are good becasue they work in data-driven way.}

The objective of controlled removal of specific types of information from neural representation is tightly related to the task of disentanglement of the representations \citep{bengio-representation-learning, lecum-disentangling}, that is, controlling and separating the different kinds of information encoded in them. In the context of transfer learning, previous methods have pursued representations which are \emph{invariant} to some properties of the input, such as genre or topic, in order to ease domain transfer \citep{ganin-adverserial-adaptation}. Those methods mostly rely on adding an adversarial component \citep{goodfellow-gan, ganin-adverserial-adaptation, xie2017controllable,zhang2018mitigating} to the main task objective: the representation is regularized by an adversary network, that competes against the encoder, trying to extract the protected information from its representation. %The entire network, including the adverserial component, is optimized in an end-to-end fashion. 

While adverserial methods showed impressive performance in various machine learning tasks, and were applied for the goal of removal of sensitive information \cite{elazar2018,coavoux2018privacy,reshef19,barrett2019adversarial}, they are notoriously hard to train. \citet{elazar2018} have evaluated adverserial methods for the removal of  demographic information from representations. They showed that the complete removal of the protected information is nontrivial: even when the attribute seems protected, different classifiers of the same architecture can often still succeed in extracting it. Another drawback of these methods is their reliance on a main-task loss in addition to the adverserial loss, making them less suitable for tasks such as debiasing pre-trained word embeddings.

% \citet{nullspace-cleaning} have utilized a "nullspace cleaning" operator for increasing privacy of linear classifiers. Given a pre-trained main-task model, they remove from the input a subspace that contains the nullspace (but is not limited to it). By doing so, they aim to remove from the representations information that is \emph{not} used for the main task, while minimally impairing the main-task performance. While similar in spirit to our method, the complementary setting---removing the nullsapce of the main-task classifier vs. projection onto the nullspace of protected attribute classifiers---results in substantial differences. For instance, their method is not iterative \yg{the method not being iterative is not a product of the complementary setting, I would mention it later / in a different flow} \sr{I think it is related to the complementary setting. since they project on the nullspace of an existing main-taks model, and there's a single such model which they want to render privacy-preserving, there's no notion of exhaustive removal. However, when our goal is to get to a point where \emph{all} linear classifiers fail, it makes sense to train many of them.}\yg{then say it like that: the setting is aimed to achieve X, and therefore does not require an iterative process.}, and does not remove protected attributes that are used by the pre-trained main-task classifier, as it focuses on a different goal: ensuring privacy of the predictor, and not de-biasing. 

\citet{nullspace-cleaning} utilized a "nullspace cleaning" operator for increasing privacy in classifiers. They remove from the input a subspace that contains (but is not limited to) the nullspace of a pre-trained classifier, in order to clean information that is \emph{not} used for the main task (and might be protected), while minimally impairing classification accuracy. While similar in spirit to our method, several key differences exist. As the complementary setting -- removing the nullsapce of the main-task classifier vs. projection onto the nullspace of protected attribute classifiers -- aims to achieve a distinct goal (privacy preserving), there is no notion of exhaustive cleaning. Furthermore, they do not remove protected attributes that \emph{are} used by the classifier (e.g. when it conditions on gender).

A recent line of work focused on projecting the representation to a subspace which does not encode the protected attributes. Under this method, one identifies directions in the latent space that correspond to the protected attribute, and removes them. In a seminal work, \citet{bolukbasi2016man} aimed to identify a ``gender subspace" in word-embedding space by calculating the main directions in a subspace spanned by the differences between gendered word pairs, such as $\vv{he} - \vv{she}$. They suggested to zero out the components of neutral words in the direction of the ``gender subspace" first principle components, and actively pushed neutral words to be equally distant from male and female-gendered words. However, \citet{gonen2019lipstick} have shown that these methods only cover up the bias and that in fact, the information is deeply ingrained in the representations. A key drawback of this approach is that it relies on an intuitive selection of \emph{a few} (or a single) gender directions, while, as we reveal in our experiments, the gender subspace is actually spanned by dozens to hundreds of orthogonal directions in the latent space, which are not necessarily as interpretable as the $\vv{he} - \vv{she}$ direction. This observation aligns with the analysis of \citet{ethayarajh2019understanding} who demonstrated that debiasing by projection is theoretically effective provided that one removes \emph{all} relevant directions in the latent space.   %\sr{not sure if the last claim belong here, though i think it should appear somewhere.}.

\begin{figure}[t]
    \centering
    \includegraphics[width=0.8\columnwidth]{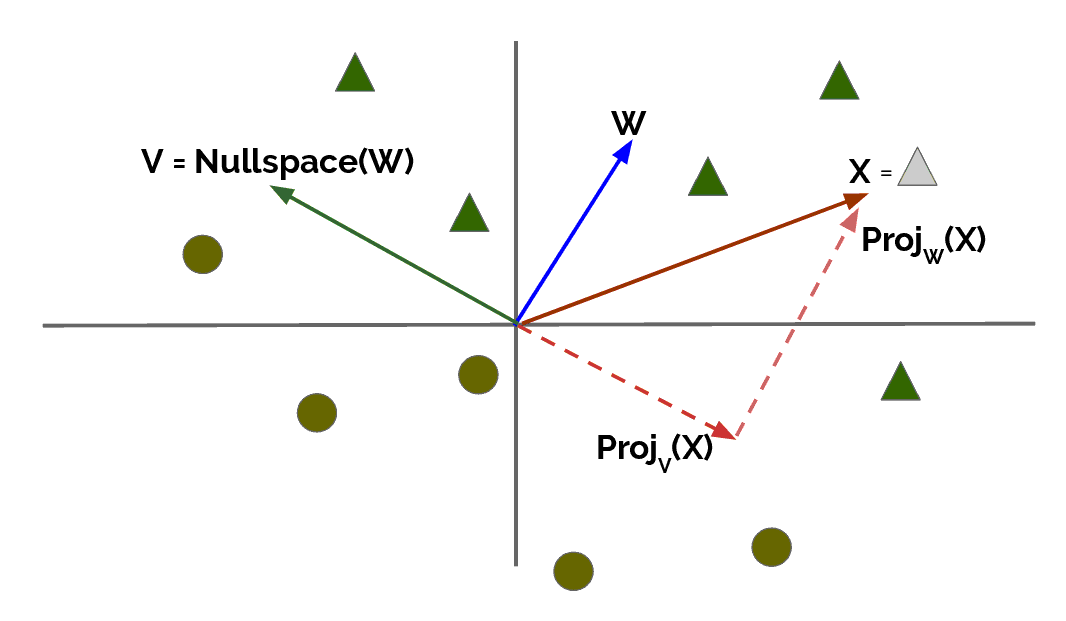}

    \caption{Nullspace projection for a 2-dimensional binary classifier. The decision boundary of $W$ is $W$ 's null-space.}
    \label{fig:nullspace-projection}
\end{figure}
% \vspace{-2.0cm}

% Many work have been published in recent years on ways to try and overcome these unwanted biases through different ways: debiasing representations \cite{adversarial_training, embedding_debiasing,...}, achieving classification fairness by different ways \cite{classification_fairness}, etc.
% Even though these methods seems to work to a certain point, \citet{elazar2018} showed that even when trying actively to remove these protected attributes actively, these can still be found through an additional model. Moreover, \citet{gonen2019lipstick} showed that even debiasing methods for word embedding do not work, and solely cover the inherited bias through manipulations, but the bias is still consistent.

\section{Objective and Definitions}

Our main goal is to ``guard'' sensitive information, so that it will not be encoded in a representation.
Given a set of vectors $x_i \in \R^d$, and corresponding discrete attributes $Z$, $z_i \in \{1, ..., k\}$ (e.g. race or gender), we aim to learn a transformation $g : \R^d \to \R^d$, such that $z_i$ cannot be predicted from $g(x_i)$. In this work we are concerned with ``linear guarding'': we seek a guard $g$ such that no linear classifier $w(\cdot)$ can predict $z_i$ from $g(x_i)$ with an accuracy greater than that of a decision rule that considers only the proportion of labels in $Z$. We also wish for $g(x_i)$ to stay informative:
%regarding $x_i$ \ye{regarding x? it can be a bit confusing, i think. maybe say regarding some label of interest y?}: 
when the vectors $x$ are used for some end task, we want $g(x)$ to have as minimal influence as possible on the end task performance, provided that $z$ remains guarded.
We use the following definitions:
\paragraph{Guarded w.r.t. a hypothesis class}
Let $X=x_1,...,x_m \in \mathcal{X} \subseteq \R^d$ be a set of vectors, with
corresponding discrete attributes $Z$, $z_i \in \{1,...,k\}$. We say the set
$X$ is \emph{guarded for Z with respect to hypothesis class $\mathcal{H}$} (conversely \emph{Z is guarded in X}) if there is no
classifier $W\in \mathcal{H}$ that can predict $z_i$ from $x_i$ at better than
guessing the majority class. 
%If $Z$ is balanced, i.e. $P(z_i = j) = \frac{1}{k} \;\;\forall 1 \leq j \leq k$, then there is no classifier $C$ that can predict at accuracy better than chance. %TODO - put back this sentence for camera ready

% We write $X!_\mathcal{H}Z$ to indicate guardedness w.r.t. class $\mathcal{H}$ and
% $X!Z$ to indicate linearly guardedness.
\paragraph{Guarding function}
A function $g: \R^n \to \R^n$ is said to be \emph{guarding X for Z (w.r.t. to class $\mathcal{H}$)} if the set
$\{g(x) | x \in X\}$ is guarded for $Z$ w.r.t. to $\mathcal{H}$.

We use the term \textbf{linearly guarded} to indicate guarding w.r.t. to the class of all linear classifiers.

%\yg{removed for space}\paragraph{Generalization} While we defined the objective with respect to a given set of vectors $x_i$ and labels $z_i$, we naturally wish for $g$ to generalize to unseen vectors. Insofar that the vectors and labels are a sufficiently large i.i.d sample from some distribution, we expect $g$ to generalize to other samples from the same distributions.

%In this case, we call the attribute $z$ \emph{linearly guarded} under $g$.

%\yg{note: both guarded and linearly-guarded can be introduced as properties of vectors+labels, regardless of the transformation $f$. We can then introduce the task of learning $f$ such that $f(x)$ is guarded.} 

\section{Iterative Nullspace Projection}
\label{sec:method}
% Given a training set $X$ of vectors $\textbf{x} \in \R^n$ and discrete\footnote{In this work, we focus in the discrete case, although the extension to continuous attributes is straightforward} attributes $Z \in \{1, ..., m\}$ (e.g. gender) we aim to learn a transformation $f: \R^n \to \R^n$ such that $z$ is no longer linearly encoded in $f(x)$, that is, no linear classifier $g(\cdot)$ can predict $z$ from from $f(x)$ in a probability larger than a random guess: $\forall{g}, P(g(f(x)) = z) = \frac{1}{m}$. In this case, we call the attribute $z$ \emph{linearly guarded} under $f$. In most use cases, the vectors $x$ are representations that are useful to some end task, such as word embeddings or hidden states in some classification network. We therefore would want to learn a $f$ that has minimal influence on the performance on the end task.

Given a set of vectors $x_i \in \R^d$ %$x_1,...,x_m \in \mathcal{X} \subseteq \R^d$, 
and a set of corresponding discrete\footnote{While this work
focuses on the discrete case, the extension to a linear regression setting is straightforward: A projection to the nullspace of a linear regressor $w$ enforces $wx=0$ for every $x$, i.e., each input is regressed to the non-informative value of zero.} protected attributes $z_i \in \mathcal{Z}$, 
%where each $z_i \in \{1,...,k\}$. Our goal is to obtain a 
we seek a \textit{linear guarding function} $g$ that remove the linear dependence between $\mathcal{Z}$ and $\mathcal{X}$. 

%We focus in two different but related settings: achieving linear gaudiness in  word embeddings, and in a general neural classification model. For the experiments concerning word-vectors debiasing, the set $\mathcal{X}$ is a training set of word vectors, and the set $\mathcal{Z}$ denotes the gender bias of each word: female-biased or male-biased.
  
We begin with a high-level description of our approach. Let $c$ be a trained linear classifier, parameterized by a matrix $W \in \R^{k \times d}$, that predicts a property $z$ with some accuracy. We can construct a projection matrix $P$ such that $W(Px)=0$ for all $x$, rendering $W$ useless on dataset $\mathcal{X}$. We then iteratively train additional classifiers $W'$ and perform the same procedure, until no more linear information regarding $\mathcal{Z}$ remains in $X$. Constructing $P$ is achieved via nullspace projection, as described below.
This method is the core of the INLP algorithm (Algorithm \ref{algo:projection}).   

\paragraph{Nullspace Projection} 
The linear interaction between $W$ and a new test point $x$ has a simple geometric interpretation: $x$ is projected on the subspace spanned by $W$'s rows, and is classified according to the dot product between $x$ and $W$'s rows, which is proportional to the components of $x$ in the direction of $W$'s rowpsace. Therefore, if we zeroed all components of $x$ in the direction of $W$'s row-space, we removed all information used by $W$ for prediction: the decision boundary found by the classifier is no longer useful. As the orthogonal component of the rowspace is the nullspace, zeroing those components of $x$ is equivalent to projecting $x$ on $W$'s nullspace. 
Figure \ref{fig:nullspace-projection} illustrates the idea for the 2 dimensional binary-classification setting, in which $W$ is just a 2-dimensional vector.

For an algebraic interpretation, recall that the null-space of a matrix $W$ is defined as the space $N(W) = \{ x | Wx = 0\}$. Given the basis vectors of $N(W)$ we can construct a projection matrix $P_{N(W)}$ into $N(W)$, yielding $W(P_{N(W)}x) = 0 \,\,\, \forall x$. 

This suggests a simple method for rendering $z$ linearly guarded for a set of vectors $X$: training a linear classifier that is parameterized by $W_0$ to predict $Z$ from $X$, calculating its nullspace, finding the orthogonal projection matrix $P_{N(W_0)}$ onto the nullspace, and using it to remove from $X$ those components that were used by the classifier for predicting $Z$. 

Note that the orthogonal projection $P_{N(w_0)}$ is the \emph{least harming linear operation} to remove the linear information captured by $W_0$ from $X$, in the sense that among all maximum rank (which is not full, as such transformations are invertible---hence not linearly guarding) projections onto the nullspace of $W_0$, it carries the least impact on distances. This is so since the image under an orthogonal projection into a subspace is by definition the closest vector in that subspace. %Keep in mind that a full rank (i.e. invertible) transformation is not linearly guarding.

%$P_{N(c_0)}$ is the \emph{maximal rank matrix} that satisfies the nullspace constraints. This follows from a known result in linear algebra by which the rank of the projection matrix is the dimensionality of the nullspace. As not satisfying the constraints means retaining linear information about $Z$, and as not reducing rank means that $P_{N(c_0)}$ is reversible (and hence not linearly guarding), obtaining the minimal rank reduction seem like the best achievable option.

%Note that the orthogonal projection $P_{N(c_0)}$ is the \emph{least harming linear operation} to remove the linear information captured by $W_0$ from $X$, in the sense that $P_{N(c_0)}$ is the \emph{maximal rank matrix} that satisfies the nullspace constraints. This follows from a known result in linear algebra by which the rank of the projection matrix is the dimensionality of the nullspace. As not satisfying the constraints means retaining linear information about $Z$, and as not reducing rank means that $P_{N(c_0)}$ is reversible (and hence not linearly guarding), obtaining the minimal rank reduction seem like the best achievable option.
%A known result in linear algebra says that the rank of the projection matrix is the dimensionality of the nullspace. As such, P is optimal in the sense that it doesn't further decrease the dimensionality of the projected space
\paragraph{\emph{Iterative} Projection}
\label{sec:debias-word-vectors}
% For the experiments concerning word-vectors debiasing, the set $\mathcal{X}$ is a training set of word vectors, and the set $\mathcal{Z}$ denotes the gender bias of each word: female-biased or male-biased. We run Algorithm, \ref{algo:projection} to learn a projection that attenuates the linear relation between $\mathcal{Z}$ and $\mathcal{X}$.

%In preliminary experiments, we found that 
Projecting the inputs $X$ on the nullspace of a single linear classifier does not suffice for making $Z$ linearly guarded: classifiers can often still be trained to recover $z$ from the projected $x$ with above chance accuracy, as there are often multiple linear directions (hyperplanes) that can partially capture a relation in multidimensional space.
% \yg{not too happy about this expalantion, but that's what we have for now.}
%In high-dimensional spaces on the projected $x$ and recover $z$. This probably stems from the fact that in high dimensional space, there are multiple different separating planes between two linearly separable classes. 
This can be remedied with an iterative process: 
After obtaining $P_{N(W_0)}$, we train classifier $W_1$ on $P_{N(W_0)}X$, obtain a projection matrix $P_{N(W_1)}$, train a classifier $W_2$ on $P_{N(W_1)}P_{N(W_0)}X$ and so on, until no classifier $W_{m+1}$ can be trained. We return the guarding projection matrix $P=P_{N(W_m)}P_{N(W_{m-1})}...P_{N(W_0)}$, with the guarding function $g(x) = Px$. Crucially, the $i$th classifier $W_i$ is trained on the data $X$ after the projection on the nullspaces of classifiers $W_0$, ..., $W_{i-1}$ and is therefore trained to find separating planes that are independent of the separating planes found by previous classifiers.

In Appendix \S\ref{INLP-proofs} we prove three desired proprieties of INLP: (1) any two protected-attribute classifiers found in INLP are orthogonal (Lemma \ref{lem:orthogonality}); (2) while in general the product of projection matrices is not a projection, the product $P$ calculated in INLP is a valid projection %that is $P^2 = P$ 
(Corollary \ref{lem:is-projection}); and (3) it projects any vector to the intersection of the nullspaces of each of the classifiers found in INLP, that is, after $n$ INLP iterations, $P$ is a projection to $N(W_0) \cap N(W_1) \dots \cap N(W_n)$ (Corollary \ref{lem:is-projection-to-intersection}). We further bound the damage $P$ causes to the structure of the space (Lemma \ref{lem:distances}). INLP can thus be seen as a linear dimensionality-reduction method, which keeps only those directions in the latent space which are \emph{not} indicative of the protected attribute.

%As such, we opted for \emph{iterative nullsapce projection}: we train many different classifiers. For the $i$th classifier $c_i$, we calculate the projection matrix $P_i$ on $c_i$'s nullspace, and use it on the input $X$ to remove the information $c$ captured. 
During iterative nullspace projection, the property $z$ becomes increasingly linearly-guarded in $Px$. % inputs $X$, and the process repeats %for $N$ iterations or 
%until the accuracy of classifier $c_i$ drops below a certain threshold. Upon termination, we calculate the projection matrix $P = P_{N}P_{N-1}...P_{1}$ and use it to project any vector $x \in X$ to a subspace of $\R^n$ in which $z$ is not linearly related to $x$: $g(x) = Px$. 
For binary protected attributes, each intermediate $W_j$ is a vector, and the nullspace rank is $d-1$. Therefore, after $n$ iterations, if the original rank of $X$ was $r$, the rank of the projected input $g(X)$ is at least $r-n$.
%Note that by this method, we remove only those directions in $\R^n$ which were found by the classifiers relevant for the prediction of the protected attribute, and do no damage information that is encoded in $X$ and is not used by the classifiers. In the binary case, $W$ is one-dimensional vector, and the nullspace rank is $n - 1$. Therefor, after $N$ iterative projections, if the original rank of $X$ is $r$, the rank of the projected input $PX$ is not smaller than $r-N$. 
The entire process is formalized in Algorithm \ref{algo:projection}.

\begin{algorithm}[H]
\SetAlgoLined
\BlankLine
 \SetKwInOut{Input}{Input}  
\Input{$(X,Z)$: a training set of vectors and protected attributes\\
      n: Number of rounds   
}

 \KwResult{A projection matrix $P$}
 
\SetKwFunction{FMain}{GetProjectionMatrix}
\SetKwProg{Fn}{Function}{:}{}
\Fn{\FMain{$X,Z$}}{
 $X_{projected} \gets X$ \\
 $P \gets I$\\
\For{i $\leftarrow 1$ \KwTo n}{
 $  W_i \leftarrow$ TrainClassifier($X_{projected},Z$) \\
 $B_i \leftarrow$ GetNullSpaceBasis($W_i$) \\
 $P_{N(W_i)} \leftarrow$ $B_{i}B{i}^T$ \\
 $P$  $\leftarrow$ $P_{N(W_i)}P$ \\
 $X_{projected}$ $\leftarrow$ $P_{N(W_i)}X_{projected}$ \\
}
\Return P
}
\caption{Iterative Nullspace Projection (INLP)}
\label{algo:projection}
\end{algorithm}

INLP bears similarities to Partial Least Squares (PLS; \citet{geladi1986partial, pls-classification}), an established regression method. Both iteratively find directions that correspond to $Z$: while PLS does so by maximizing covariance with $Z$ (and is thus less suited to classification), INLP learns the directions by training classifiers with an arbitrary loss function. Another difference is that INLP focuses on learning a projection that neutralizes $Z$, while PLS aims to learn low-dimensional representation of $X$ that keeps information on $Z$.

%\vspace{-1em}
\paragraph{Implementation Details}
\label{implementaton}
A naive implementation of Algorithm \ref{algo:projection} is prone to numerical errors, due to the accumulative projection-matrices multiplication $P \gets P_{N(W_i)}P$. To mitigate that, we use the formula of \citet{benisrael2015projectors}, which connects the intersection of nullspaces with the projection matrices to the corresponding rowspaces:
\vspace{-1em}
\begin{equation}
\resizebox{0.91\hsize}{!}{%
    $N(w_1)  \cap  \dots  \cap  N(w_n) = N(P_R(w_1) + \dots + P_R(w_n))$
    }
    \label{ben-israel-formula}
\end{equation}
Where $P_R(W_i)$ is the orthogonal projection matrix to the row-space of a classifier $W_i$.
Accordingly, in practice, we do not multiply $P \gets P_{N(W_i)}P$ but rather collect \emph{rowspace} projection matrices $P_{R(W_i)}$ for each classifier $W_i$. In place of each input projection $X_{projected} \gets P_{N(W_i)}X_{projected}$, we recalculate $P:=P_{N(w_1) \cap ... \cap N(w_i)}$ according to \ref{ben-israel-formula}, and perform a projection $X_{projected} \gets PX$. Upon termination, we once again apply \ref{ben-israel-formula} to return the final nullspace projection matrix $P_{N(W_1) \cap ... \cap N(W_n)}$. The code is publicly available.\footnote{\href{https://github.com/Shaul1321/nullspace\_projection}{https://github.com/Shaul1321/nullspace\_projection} }

\section{Application to Fair Classification}
\label{sec:deep-debiasing}
The previous section described the INLP method for producing a linearly guarding function $g$ for a set of vectors. We now turn to describe its usage in the context of providing fair classification by a (possibly deep) neural network classifier.
% \ye{maybe we should make this sentence less binding? also, i don't recall defining fair classification to this point, what does it mean? we're also not claiming to perform fair classification}.\hg{I think it is explained well enough throughout this subsection}

In this setup, we are given, in addition to $X$ and $Z$ also labels $Y$, and wish to construct a classifier $f: X \to Y$, while being \emph{fair} with respect to $Z$. Fairness in classification can be defined in many ways \cite{equal-opportunities+odds,demographic-parity,zhang2018mitigating}. We focus on a notion of fairness by which the predictor $f$ is oblivious to $Z$ when making predictions about $Y$.%\footnote{\citet{elazar2018} show that in a balanced setting, this notion is equivalent to some common fairness definition such as demographic parity \cite{demographic-parity}, equality of odds \cite{equal-opportunities+odds} and equality of opportunity. \cite{equal-opportunities+odds}.}

To use linear guardedness in the context of a deep network, recall that a classification network $f(x)$ can be decomposed into an encoder $enc$ followed by a linear layer $W$: $f(x) = W \cdot enc(x)$, where $W$ is the last layer of the network and $enc$ is the rest of the network. If we can make sure that $Z$ is linearly guarded in the inputs to $W$, then $W$ will have no knowledge of $Z$ when making its prediction about $Y$, making the decision process oblivious to $Z$. Adversarial training methods attempt to achieve such obliviousness by adding an adversarial objective to make $enc(x)$ itself guarding. We take a different approach and add a guarding function \emph{on top of} an already trained $enc$. 
%\yg{disadvantages and advantages of our approach, if not already discussed in related work. we are easier to optimize, more stable,... but they can influence the encoder.}\yg{will have to wait to camera ready.}
\\[0.2em]\noindent\textbf{We propose the following procedure.} Given a training set $X$,$Y$ and protected attribute $Z$, we first train a neural network $f=W\cdot enc(X)$ to best predict $Y$. This results in an encoder that extracts effective features from $X$ for predicting $Y$.
We then consider the vectors $enc(X)$, and use the INLP method to produce a linear guarding function $g$ that guards $Z$ in $enc(X)$.
At this point, we can use the classifier $W\cdot g(enc(x))$ to produce oblivious decisions, however by introducing $g$ (which is lower rank than $enc(x)$) we may have harmed $W$s performance. We therefore freeze the network and fine-tune only $W$ to predict $Y$ from $g(enc(x))$, producing the final fair classifier $f'(x) = W'\cdot g(enc(x))$. Notice that $W'$ only sees vectors which are linearly guarded for $Z$ during its training, and therefore cannot take $Z$ into account when making its predictions, ensuring fair classification. %\footnote{Had $W'$ been able to rely on gender in its prediction, it could be considered a classifier that recovers gender in an above-random accuracy.}

We note that our notion of fairness by obliviousness does not, in the general case, correspond to other fairness metrics, such as equality of odds or of opportunity. It does, however, \emph{correlate} with fairness metrics, as we demonstrate empirically.
\\[0.2em]\noindent\textbf{Further refinement.}
%As discussed above\yg{where?}, linear guardedness exists in expectation on the entire dataset: in a classification
Guardedness is a property that holds in expectation over an entire dataset. For example, when considering a dataset of individuals from certain professions (as we do in \S\ref{sec:bios}), it is possible that the entire dataset is guarded for gender, yet if we consider only a subset of individuals (say, only nurses), we may still be able to recover gender with above majority accuracy, in that sub-population. As fairness metrics are often concerned with classification behavior also within groups, we propose the following refinement to the algorithm, which we use in the experiments in \S\ref{sec:deepmoji} and \S\ref{sec:bios}:
in each iteration, we train a classifier to predict the protected attribute not on the entire training set, but only on the training examples belonging to a single (randomly chosen) main-task class (e.g. profession). By doing so, we push the protected attribute to be linearly guarded in the examples belonging to each of the main-task labels.

\section{Experiments and Analysis}

\subsection{``Debiasing'' Word Embeddings}
\label{sec:debiasing-word-embeddings}
In the first set of experiments, we evaluate the INLP method in its ability to debias word embeddings \cite{bolukbasi2016man}.
After ``debiasing'' the embeddings, we repeat the set of diagnostic experiments of \citet{gonen2019lipstick}.

\paragraph{Data.} Our debiasing targets are the uncased version of GloVe word embeddings \citep{gn-glove}, after limiting the vocabulary to the 150,000 most common words.
To obtain labeled data $X$,$Z$ for this classifier, we use 
the 7,500 most male-biased and 7,500 most female-biased words (as measured by the projection on the $\vv{he} - \vv{she}$ direction), as well as 7,500 neutral vectors, with a small component (smaller than 0.03) in the gender direction. The data is randomly divided into a test set (30\%), and training and development sets (70\%, further divided into 70\% training and 30\% development examples).

\paragraph{Procedure} We use a $L_2$-regularized SVM classifier \citep{svm} trained to discriminate between the 3 classes: male-biased, female-biased and neutral. We run Algorithm \ref{algo:projection} for 35 iterations.

\subsubsection{Results}

\noindent\textbf{Classification.}
Initially, a linear SVM classifier perfectly discriminates between the two genders (100\% accuracy). The accuracy drops to 49.3\% following INLP. To measure to what extent gender is still encoded in a \emph{nonlinear} way, we train a 1-layer ReLU-activation MLP. The MLP recovers gender with accuracy of 85.0\%. This is expected, as the INLP method is only meant to achieve \emph{linear guarding}\footnote{Interestingly, RBF-kernel SVM (used by \citet{gonen2019lipstick}) achieves random accuracy.}.

\noindent\textbf{Human-selected vs. Learned Directions.} Our method differs from the common projection-based approach by two main factors: the numbers of directions we remove, and the fact that those directions are learned iteratively from data. Perhaps the benefit is purely due to removing more directions? %To disentangle those two factors and assess whether the data-driven choice of the directions we remove outperforms manually selecting those directions, 
We compare the ability to linearly classify words by gender bias after removing 10 directions by our method (running Algorithm \ref{algo:projection} for 10 iterations) with the ability to do so after removing 10 manually-chosen directions defined by the difference vectors between gendered pairs \footnote{We use the following pairs, taken from \citet{bolukbasi2016man}:  (``woman", ``man"), (``girl", ``boy"), (``she", ``he"), (``mother", ``father"), (``daughter", ``son"), (``gal", ``guy"), (``female", ``male"), (``her", ``his"), (``herself", ``himself"), (``mary", ``john").}. INLP-based debiasing results in a very substantial drop in classification accuracy (54.4\%), while the removal of the predefined directions only moderately decreases accuracy (80.7\%). This shows that data-driven identification of gender-directions outperforms manually selected directions: there are many subtle ways in which gender is encoded, which are hard for people to imagine.
%\\[0.2em]\noindent\textbf{Discussion.} 
\paragraph{Discussion.}
Both the previous method and our method start with the main gender-direction of $\vv{he}-\vv{she}$. However, while previous attempts take this \emph{direction} as the information that needs to be neutralized, our method instead considers the \emph{labeling} induced by this gender direction, and then iteratively finds and neutralizes directions that correlate with this labeling. It is likely that the $\vv{he}-\vv{she}$ direction is one of the first to be removed, but we then go on and \emph{learn} a set of other directions that correlate with the same labeling and which are predictive of it to some degree, neutralizing each of them in turn. Compared to the 10 manually identified gender-directions from \citet{bolukbasi2016man}, it is likely that our learned directions capture a much more diverse and subtle set of gender clues in the embedding space.

\noindent\textbf{Effect of debiasing on the embedding space.} In appendix \S \ref{sec:appendix-neighbors-examples} we provide a list of 40 random words and their closest neighbors, before and after INLP, showing that INLP doesn't significantly damage the representation space that encodes lexical semantics. We also include a short analysis of the influence on a specific subset of inherently gendered words: gendered surnames (Appendix \S \ref{sec:appendix-neighbors-examples-names}).

Additionally, we perform a semantic evaluation of the debiased embeddings using multiple word similarity datasets (e.g. SimLex-999 \cite{hill2015simlex}). We find large improvements in the quality of the embeddings after the projection (e.g. on SimLex-999 the correlation with human judgements improves from 0.373 to 0.489) and we elaborate more on these findings in Appendix \S\ref{sec:appendix-quantitative_emb}.
% We follow the standard evaluation and report spearman correlation between the similarity of the word embedding pairs and the gold similarity from the resource. The baseline correlation of the Glove vectors is 0.373, and after the projection the correlation is 0.489, an \emph{improvement} of 0.116 points.
%\yg{moved from separate "classification" below to here}
% \yg{can remove if space is a concern. would be nice to include back in camera ready.}
% To measure to what extent gender information is still encoded in a \emph{nonlinear} way in the vectors, we train nonlinear classifiers to recover the gender: an SVM with an RBF kernel, and a multilayer perceptron (MLP) with a hidden layer of size 256 and ReLu activation. While the RBF-SVM achieves low accuracy of 57.2\%, the MLP is still able to recover gender information with accuracy of 79.6\%, indicating that some of the gender-related information remains. This is expected, as the INLP method is only meant to achieve \emph{linear guarding} of the information. Users are warned to not trust it in nonlinear settings.

\noindent\textbf{Clustering.}
Figure \ref{fig:tsne} shows t-SNE \citep{t-SNE} projections of the 2,000 most female-biased and 2,000 most male-biased words, originally and after $t=3$, $t=18$ and $t=35$ projection steps. The results clearly demonstrate that the classes are no longer linearly separable:  this behavior is qualitatively different from previous word vector debiasing methods, which were shown to maintain much of the proximity between female and male-biased vectors \citep{gonen2019lipstick}. To quantify the difference, we perform K-means clustering to $K=2$ clusters on the vectors, and calculate the V-measure \citep{v-measure} which assesses the degree of overlap between the two clusters and the gender groups. For the t-SNE projected vectors, the measure drops from 83.88\% overlap originally, to 0.44\% following the projection; and for the original space, the measure drops from 100\% to 0.31\%. 
\\[0.2em]\noindent\textbf{WEAT.}
While our method does not guarantee attenuating the bias-by-neighbors phenomena that is discussed in \citet{gonen2019lipstick}, it is still valuable to quantify to what extent it does mitigate this phenomenon. We repeat the Word Embedding Association Test (WEAT) from \citet{caliskan2017semantics} which aims to measure the association in vector space between male and female concepts and stereotypically male or female professions. Following \citet{gonen2019lipstick}, we represent the male and female groups with common names of males and females, rather than with explicitly gendered words (e.g. pronouns). Three tests evaluate the association between a group of male names and a groups of female names to (1) career and family-related words; (2) art and mathematics words; and (3) artistic and scientific fields. In all three tests, we find that the strong association between the groups no longer exists after the projection (non-significant p-values of 0.855, 0.302 and 0.761, respectively). 
\\[0.2em]\noindent\textbf{Bias-by-Neighbors.}
To measure bias-by-neighbors as discussed in \citep{gonen2019lipstick}, we consider the list of professions provided in \citep{bolukbasi2016man} and measure the correlation between bias-by projection and bias by neighbors, quantified as the percentage of the top 100 neighbors of each profession which were \emph{originally} biased-by-projection towards either of the genders. We find strong correlation of 0.734 (compared with 0.852 before), indicating that much of the bias-by-neighbors remains.\footnote{Note that if, for example, STEM-related words are originally biased towards men, the word ``chemist" after the projection may still be regarded as male-biased by neighbors, not because an inherent bias but due to its proximity to other \emph{originally} biased words (e.g. other STEM professions). %\yg{also, why is this footnote on a different page than its anchor.}
}

%\input{tables/algo-practical.tex}

% \begin{figure*}[t]
%     %\centering
%     \includegraphics[scale=0.035]{plots/word2vec-debiasing-progress.png}
%     \caption{T-SNE projection of FastText vectors of the most biased words after t=0, 5 18 and 35 iterations for INLP.}
%     \label{fig:tsne}
    
% \end{figure*}

% \begin{figure*}[t!]
%     \centering
%     \includegraphics[scale=0.08]{plots/tpr-diff-fasttext.png}
%     \caption{The correlation between $GAP_{male,y}^{TPR}$ and the relative proportion of women in profession $y$, for FastText representation, before (left) and after (right) the projection.}
%     \label{fig:correlations-fasttext}
    
% \end{figure*}

% \begin{figure*}[t!]
%     \centering
%     \includegraphics[scale=0.08]{plots/tpr-diff-bert.png}
%     \caption{The correlation between $GAP_{male,y}^{TPR}$ and the relative proportion of women in profession $y$, for BERT representation, before (left) and after (right) the projection.}
%     \label{fig:correlations-bert}
    
% \end{figure*}

\subsection{Fair Classification: Controlled Setup}
\label{sec:deepmoji}
% \begin{figure}[]
%     \centering
%     \includegraphics[width=0.8\columnwidth]{}
%     \caption{The TPR differences as a function of the ratio of tweets written by black individuals in the positive-sentiment class.}
%     \label{fig:ratio-change}
% \end{figure}

We now evaluate using INLP with a deeper classifier, with the goal of achieving fair classification.
\\[0.2em]\textbf{Classifier bias measure: TPR-GAP.}
To measure the bias in a classifier, we follow \citet{biasbios} and use the TPR-GAP measure. This measure quantifies the bias in a classifier by considering the difference (GAP) in the True Positive Rate (TPR) between individuals with different protected attributes (e.g. gender/race).
The TPR-GAP is tightly related to the notion of fairness by equal opportunity \citep{equal-opportunities+odds}: a fair classifier is expected to show similar success in predicting the task label $Y$ for the two populations, when conditioned on the true class. Formally, for a binary protected attribute $z$ and a true class $y$, define:
\begin{align}
    TPR_{z,y} =& P[\hat{Y} = y | Z =  \textsl{z}, Y = y ]\\
    GAP_{z,y}^{TPR} =& TPR_{z,y} - TPR_{z',y}
\end{align}
where $Z$ is a random variable denoting binary protected attribute,  $z$ and $z'$ denote its two values, and $Y$, $\hat{Y}$ are random variables denoting the correct class and the predicted class, respectively.
\\[0.2em]\textbf{Experiment setup.}
% \sr{need to put TPR and TPR diff definition here and say these are the measure we'll use from this point forward}
We begin by experimenting with a controlled setup, where we control for the proportion of the protected attributes within each main-task class. We follow the setup of \citet{elazar2018} which used a twitter dataset, collected by \citet{blodgett2016demographic}, where each tweet is associated with ``race'' information and a sentiment which was determined by their belonging to some emoji group. 

Naturally, the correlation between the protected class labels and the main-class labels may influence the fairness of the model, as high correlation can encourage the model to condition on the protected attributes.  %\citet{elazar2018} have shown that even in a perfectly balanced setup of both the main class label (in this case, sentiment) and the protected attribute (in this case, race) information about the protected attribute can still be extracted from the representations. When the labels aren't balanced (e.g. the positive sentiment tweets contain 80\% AAE and 20\% SAE whereas the negative sentiment tweets contain 20\% AAE and 80\% SAE), the protected attribute is even more embedded in the representations and can be extracted with higher rates. 
We measure the TPR-GAP on predicting sentiment for the different race groups (African American English (AAE) speakers and Standard American English (SAE) speakers), with different imbalanced conditions, with and without application of our ``classifier debiasing'' procedure.

In all experiments, the dataset is overly balanced with respect to both sentiment and race (50k instances for each). We change only the \emph{proportion} of each race within each sentiment class (e.g., in the 0.7 condition, the ``happy'' sentiment class is composed of 70\% AAE / 30\% SAE, while the ``sad'' class is composed of 30\% AAE / 70\% SAE).

\begin{table}[t!]
\resizebox{\columnwidth}{!}{%
\begin{tabular}{c|cc|cc}

& \multicolumn{2}{c}{Sentiment} & 
\multicolumn{2}{c}{TPR-Gap} \\
Ratio & Original & INLP & Original & INLP \\ \hline
0.5 &	 0.76 & 	 0.75 &		 0.19 &		 0.16 \\
0.6 &	 0.78 & 	 0.74 &		 0.29 &		 0.22 \\
0.7 &	 0.81 & 	 0.66 &		 0.38 &		 0.24 \\
0.8 &	 0.84 & 	 0.67 &		 0.45 &		 0.15 \\
\end{tabular}
}
\caption{The Sentiment scores (in accuracy, higher is better) and TPR differences (lower is better) as a function of the ratio of tweets written by black individuals in the positive-sentiment class.}
\label{tbl:control-results}

\end{table}

%Does the same phenomenon also affect $GAP_{male,y}^{TPR}$, and how will INLP perform when the labels are extremely unbalanced?

Our classifier is based on the DeepMoji encoder \cite{felbo2017}, followed by a 1-hideen-layer MLP. The DeepMoji model was trained on millions of tweets in order to predict their emojis; a model which was proven to perform well on different classification tasks \cite{felbo2017}, but also encodes demographic information \cite{elazar2018}. We train this classifier to predict sentiment. We then follow the procedure in \S\ref{sec:deep-debiasing}: training a guarding function on the hidden layer of the MLP, and re-training the final linear layer on the guarded vectors. 
Table~\ref{tbl:control-results} presents the results. %\yg{can the figure be on this page?}

As expected the TPR-GAP grows as we increase the correlation between class labels and protected attributes. The accuracy grows as well. Applying our debiasing technique significantly reduced the TPR gap in all settings, although hurting more the main task accuracy in the highly-imbalanced setting. In Appendix \ref{sec:appendix-acc_tpr_inlp}, we give some more analysis on the balance between performance and TPR-Gap and show that one can control for this ratio, by using more iterations of INLP.
% As expected, the debiasing also hurts main task accuracy, though mildly.

\subsection{Fair Classification: In the Wild}
\label{sec:bios}

We now evaluate the fair classification approach in a less artificial setting, measuring gender bias in biography classification, following the setup of \citet{biasbios}.

% \begin{table}
% \input{tables/bios-results.tex}
% \label{tbl:bios-results}
% \end{table}
\begin{table}[t!]
\resizebox{\columnwidth}{!}{%
\begin{tabular}{ccccc}

                          &           & BoW & FastText & BERT \\ \hline
\multirow{2}{*}{Accuracy (profession)} & Original  & 78.2   & 78.1        & 80.9    \\
                          & +INLP & 80.1   & 73.0        & 75.2    \\ \hline
\multirow{2}{*}{$GAP_{male}^{TPR, RMS}$} & Original  & 0.203   & 0.184        & 0.184    \\
                          & +INLP & 0.124   &  0.089        &  0.095  \\ \hline
\end{tabular}
}
\caption{Fair classification on the Biographies corpus.}
\label{tbl:bios-results}

\end{table}

\begin{figure}[t!]
    \centering
    \includegraphics[width=0.8\columnwidth]{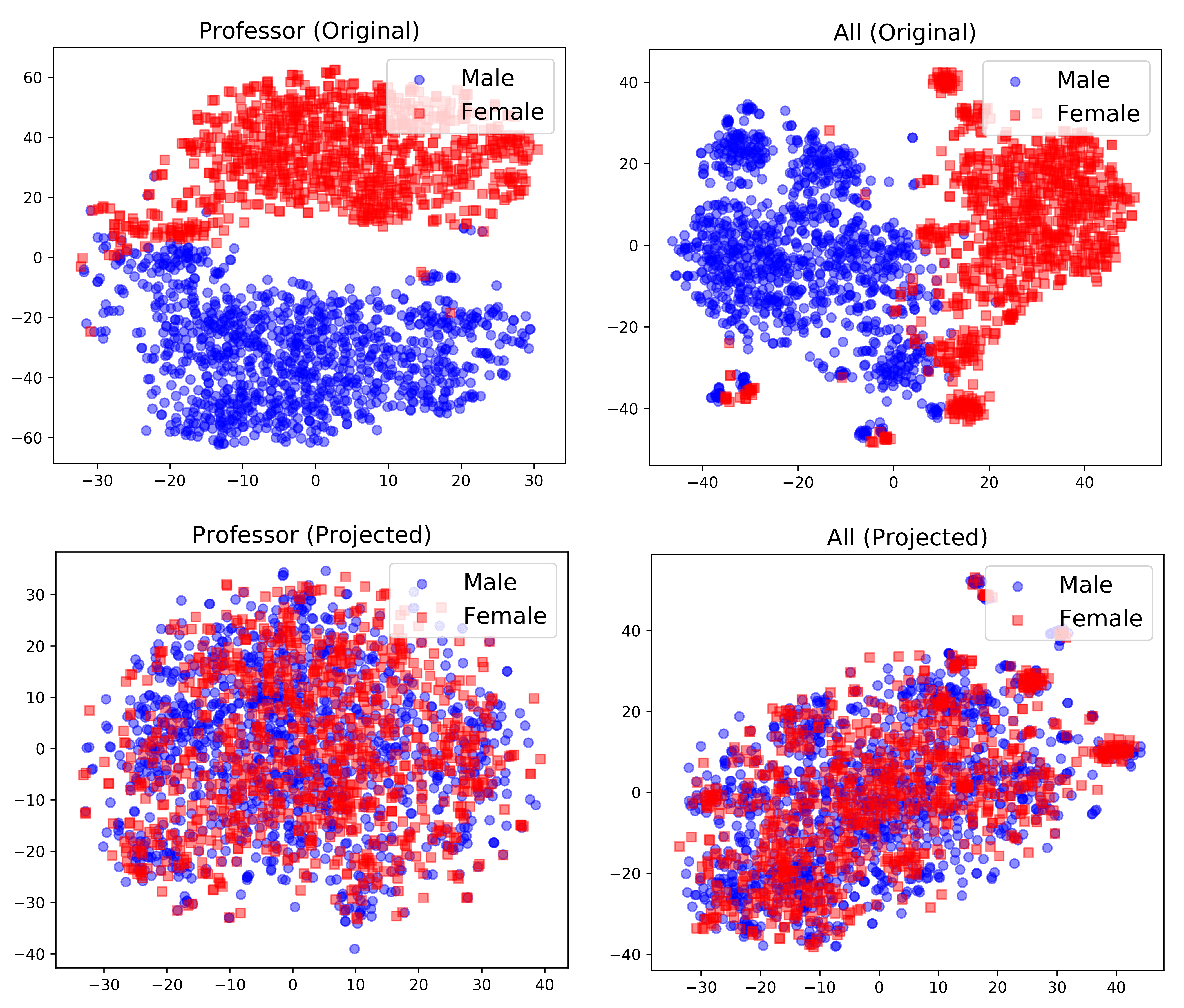}
    \caption{t-SNE projection of BERT representations for the profession ``professor" (left) and for a random sample of all professions (right), before and after the projection.}
    \label{fig:bert-change-tsne}
    
\end{figure}

\begin{figure}[t!]
\centering
\subfloat{
\includegraphics[width=0.5\columnwidth]{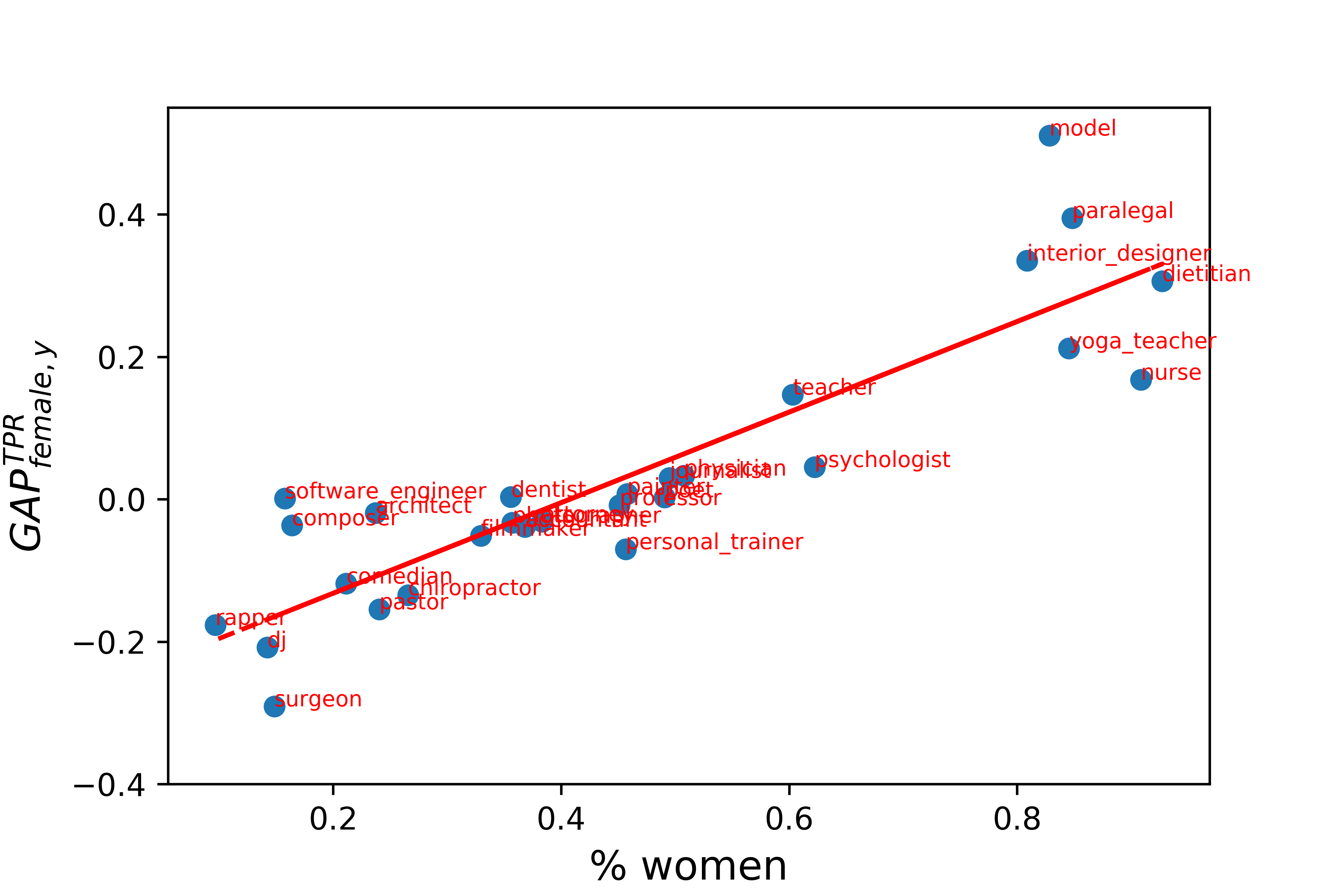}
\label{fig:correlations-bert-biased}
}
% \vspace{-3mm}
% \centering
\subfloat{
\includegraphics[width=0.5\columnwidth]{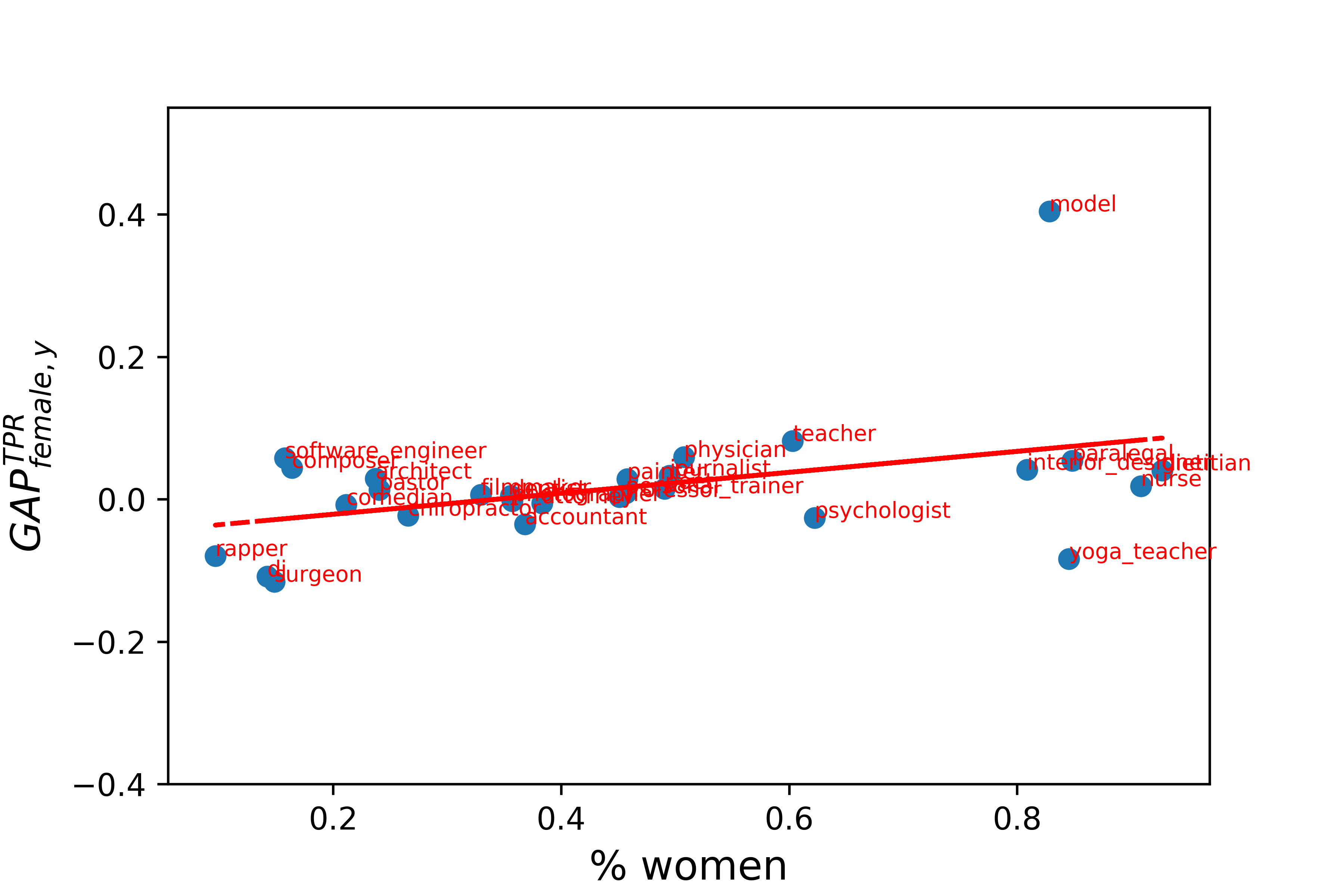}
\label{fig:correlations-bert}
}
\caption{The correlation between $GAP_{female,y}^{TPR}$ and the relative proportion of women in profession $y$, for BERT representation, before (left; R=0.883) and after (right; R=0.470) the projection.}
\label{fig:bert-change-tpr}
\end{figure}

%\citet{biasbios}
They scraped the web and collected a dataset of short biographies, annotated by gender and profession. They trained logistic regression classifiers to predict the profession of the biography's subject based on three different input representation: bag-of-words (BOW), bag of word-vectors (BWV), and RNN based representation. 
We repeat 
their experiments, using INLP for rendering the classifier oblivious of gender.  
\\[0.2em]\textbf{Setup.} Our data contains 393,423 biographies.\footnote{The original dataset had 399,000 examples, but 5,557 biographies were no longer available on the web.} We follow the train:dev:test split of \citet{biasbios}, resulting in 255,710 training examples (65\%), 39,369 development examples (10\%) and 98,344 (25\%) test examples. The dataset has 28 classes (professions), which we predict using a multiclass logistic classifier (in a one-vs-all setting). We consider three input representations: BOW, BWV and BERT \citep{bert} based classification. In BOW, we represent each biography as the sum of one-hot vectors, each representing one word in the vocabulary. In the BWV representation, we sum the FastText token representations \citep{fasttext} of the words in the biography. In BERT representation, we represent each biography as the last hidden state of BERT over the $CLS$ token. Each of these representations is then fed into the logistic classifier to get final prediction. We do not finetune FastText or BERT.  

We run INLP with scikit-learn \citet{scikit-learn} linear classifiers. We use 100 logistic classifiers for BOW, 150 linear SVM classifiers for BWV, and 300 linear SVM classifiers for BERT.  
\\[0.2em]\textbf{Bias measure.} We use the TPR-GAP measure for each profession. 
Following \citet{biasbios2}, we also calculate the root-mean square of $GAP_{g,y}^{TPR}$ over all professions $y$, to get a single per-gender bias score:
\begin{equation}
    GAP_{g}^{TPR, RMS} = \sqrt{\frac{1}{|C|} \sum_{y \in C} (GAP_{g,y}^{TPR})^2}
\end{equation}
where $C$ is the set of all labels (professions). 

\citet{biasbios} have shown that $GAP_{g,y}^{TPR}$ strongly correlates with the percentage of women in profession $y$, indicating that the true positive rate of the model is influenced by gender.

\subsubsection{Results}
\label{sec:bios-results}
\paragraph{Main results} The results are summarized in Table \ref{tbl:bios-results}. INLP moderately changes main-task accuracy, with a 1.9\% increase in BOW, a 5.1\% decrease in performance in BWV and a 5.51\% decrease in BERT. $GAP_{g}^{TPR, RMS}$ is significantly decreased, indicating that on average, the true positive rate of the classifiers for male and female become closer: in BOW representation, from 0.203 to 0.124 (a 38.91\% decrease); in BWV, from 0.184 to 0.089 (a 51.6\% decrease); and in BERT, from 0.184 to 0.095 (a 48.36\% decrease).  We measure the correlation between $GAP_{y, female}^{TPR}$ for each profession $y$, and the percentage of biographies of women in that profession. In BOW representation, the correlation decreases from 0.894 prior to INLP to 0.670 after it (a 33.4\% decrease). In BWV representation, the correlation decreases from 0.896 prior to INLP to 0.425 after it (a 52.5\% decrease). In BERT representation, the correlation decreases from 0.883 prior to INLP to 0.470 following it (a 46.7\% decreases; Figure \ref{fig:correlations-bert}). \citet{biasbios} report a correlation of 0.71 for BWV representations when using a ``scrubbed" version of the biographies, with all pronouns and names removed. INLP significantly outperforms this baseline, while maintaining all explicit gender markers in the input. 
\\[0.2em]\textbf{Analysis.} How does imposing fairness influence the importance the logistic classifier attribute to different words in the biography? We take advantage of the BOW representation and visualize which features (words) influence each prediction (profession), before and after the projection.
According to Algorithm \ref{algo:projection}, to debias an input $x$, we multiply $W(Px)$. Equivalently, we can first multiply $W$ by $P$ to get a ``debiased'' weight matrix $W'$. 
We begin by testing how much the debiased weights of words that are considered to be biased %(according to the gender-direction based definition in \S\ref{sec:debias-word-vectors}) 
were changed during the debiasing, compared to random vocabulary words. We compare the relative change before and after the projection of these words, for every occupation. 
Biased words undergo an average relative change of x1.23 compared to the average change of the entire vocabulary, demonstrating that biased words indeed change more.
% In average, the most biased words in each profession were changed by 1.29 over a sample of random words which were changed. 
The per-profession breakout is available in Figure \ref{fig:rel-change} in Appendix \S\ref{sec:appendix-bios-bag-of-words}.\\
\indent Next, we test the words that were changed the most during the INLP process. We compare the weight difference before and after the projection. We sort each profession words by weight, and average their location index for each professions. Many words indeed seem gender specific (e.g. \textit{ms.}, \textit{mr.}, \textit{his}, \textit{her}, which appears in locations 1, 2, 3 and 4 respectively), but some seem unrelated, perhaps due to spurious correlations in the data.
The complete list is available in Table \ref{table:word_change} in the Appendix \S \ref{sec:appendix-bios-bag-of-words}; an analogous analysis for the FastText representation is available at Appendix \S \ref{sec:appendix-bios-bag-of-vectors}.

\section{Limitations}
\label{sec:limitations}
A limitation of our method when used in the context of fairness is that, like other learning approaches, it depends on the data $X$,$Z$ that is fed to it, and works under the assumption that the training data is sufficiently large and is sampled i.i.d from the same distribution as the test data. This condition is hard to achieve in practice, and failure to provide sufficiently representative training data may lead to biased classifications even after its application. Like other methods, there are no magic guarantees, and the burden of verification remains on the user. It is also important to remember that the method is designed to achieve a very specific sense of protection: removal of linear information regarding a protected attribute. While it may correlate with fairness measures such as demographic parity, it is not designed to ensure them. Finally, it is designed to be fed to a linear decoder, and the attributes are not protected under non-linear classifiers.

\section{Conclusion} We present a novel method for removing linearly-represented information from neural representations. We focus on bias and fairness as case studies, and demonstrate that across increasingly complex settings, our method is capable of attenuating societal biases that are expressed in representations learned from data. %Our method is also shown applicable for increasing fairness in multi-class classification setting: % that mimics a real-word use case 
%predicting a profession from a biography of a person. We demonstrate that across three increasingly complicated architectures---bag of words, word embeddings and BERT representations---INLP is robust in decreasing %(but not eliminating) 
%bias. , as measured by True Positive Rate differences between genders. 
% We also perform a controlled experiment on a DeepMoji model, to assess the influence of uneven division of protected attributes among main-task labels---as is commonly the case in many real-world application. 

While this work focuses on societal bias and fairness, Iterative Nullspace Projection has broader possible use-cases, and can be utilized to remove specific components from a representation, in a controlled and deterministic manner. This method can be applicable for other end goals, such as style-transfer, disentanglement of neural representations and increasing their interpretability. We aim to explore those directions in a future work.  

% \begin{figure}[]
% \centering
% \includegraphics[width=.3\textwidth]{plots/tsne-before.png}%
% \includegraphics[width=.3\textwidth]{plots/tsne-after.png}%
% \\
% \begin{tabular}{cc}Original &\hspace{16em} Projected\end{tabular}
% \caption{t-SNE projection of the most biased vectors, before (left) and after (right) the projection.}
% \label{fig:tsne}
% \end{figure}

% \begin{figure}
%     \input{figures/tsne.tex}
% \end{figure}

\section*{Acknowledgements}
We thank Jacob Goldberger and Jonathan Berant for fruitful discussions. 
This project received funding from the Europoean Research Council (ERC) under the Europoean Union's Horizon 2020 research and innovation programme, grant agreement No. 802774 (iEXTRACT).

\bibliographystyle{acl_natbib}
\bibliography{main}

\newpage
\clearpage

\appendix
\section{Appendix}
\label{appendix}

\setcounter{figure}{0}
\setcounter{table}{0}

\subsection{INLP Guarantees}
\label{INLP-proofs}

% \yg{See if you can get a bit more structure here. In particular in this section, you say you prove several things, but it is common to structure it with "claims" or "theorems", like you have with the lemmas. Also, what is the role of "INLP Appeoximately Preserves Distances"? is this another theorem / claim? what are you proving there? some structure will make it much more readable.}

In this section, we prove, for the binary case, an orthogonality property for INLP classifiers: each two classifiers $w_i$ and $w_j$ from two iterations steps $i$ and $j$ are orthogonal (Lemma \ref{lem:orthogonality}). Several useful properties of the matrix $P$ that is returned from INLP emerge as a direct result of orthogonality: the product of the projection matrices calculated in the different INLP steps is commutative (Corollary \ref{lem:comm}); P is a valid projection (Corollary \ref{lem:is-projection}); and P projects to a subspace which is the intersection of the nullspaces of all INLP classifiers $ N(w_1) \cap N(w_2) \cap \dots \cap N(w_n)$ (Corollary \ref{lem:is-projection-to-intersection}). Furthermore, we bound the influence of $P$ on the structure of the representation space, demonstrating that its impact is limited only to those parts of the vectors that encode the protected attribute (Lemma \ref{lem:distances}).

We prove those properties for two consecutive projection matrices $P_1$ and $P_2$ from two consecutive iterations of Algorithm \ref{algo:projection}, presented below in \ref{two-iteration}. The general property follows by induction. 
\begin{enumerate}
  \item $w_1 = \operatorname*{argmin}_w \mathcal{L}(w; X; Z)$
  \item $P_1:=P_{N(w_1)}$= GetProjectionMatrix($N(w_1$))
  \item $X' = P_1x$
  \item $w_2 = \operatorname*{argmin}_w \mathcal{L}(w; X'=P_1X; Z)$
  %\item \hspace{0.5cm} $ = \operatorname*{argmin}_w \mathcal{L}(wP_1; X; Z)$
  %\item $  = \operatorname*{argmin}_{w \in N(w_1)} \mathcal{L}(w; X; Z)$
  \item $P_2:=P_{N(w_2)}$= GetProjectionMatrix($N(w_2$))
  \label{two-iteration}
\end{enumerate}

\textbf{INLP Projects to the Intersection of Nullspaces.}

\begin{lemma}
\label{lem:orthogonality}
if $w_2$ is initialized as the zero vector and trained with SGD, and the loss $\mathcal{L}$ is convex, then $w_2$ is orthogonal to $w_1$, that is, $w_1 \cdot w_2 = 0$.
\end{lemma}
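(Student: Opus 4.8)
The plan is to show that the entire SGD trajectory for $w_2$ is confined to the nullspace $N(w_1)$, and then to note that in the binary case membership in $N(w_1)$ is exactly orthogonality to $w_1$. Since $w_1$ is a single vector, its nullspace $N(w_1) = \{v : w_1 \cdot v = 0\}$ is the orthogonal complement of $\mathrm{span}(w_1)$, so proving $w_2 \in N(w_1)$ immediately gives $w_1 \cdot w_2 = 0$.

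First I would record the key structural fact about the gradient. The second classifier is trained on the projected vectors $x_i' = P_1 x_i$, and for a linear model the loss depends on $w$ only through the scores $w \cdot x_i'$ (optionally plus an $L_2$ penalty $\tfrac{\lambda}{2}\|w\|^2$). By the chain rule,
\[
\nabla_w \mathcal{L}(w; P_1 X; Z) = \sum_i \ell'(w \cdot x_i', z_i)\, x_i' + \lambda w,
\]
so the data-dependent part of the gradient is always a linear combination of the vectors $x_i'$. The crucial observation is that each $x_i' = P_1 x_i$ lies in $N(w_1)$ by construction of the orthogonal projection $P_1 = P_{N(w_1)}$; hence $\mathrm{span}\{x_i'\} \subseteq N(w_1)$.

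Next I would run the induction over SGD steps, starting from $w_2^{(0)} = 0 \in N(w_1)$. Assuming $w_2^{(t)} \in N(w_1)$, the update $w_2^{(t+1)} = w_2^{(t)} - \eta \nabla_w \mathcal{L}$ keeps the iterate in $N(w_1)$: the loss term contributes a vector in $\mathrm{span}\{x_i'\} \subseteq N(w_1)$, and the regularization term $\lambda w_2^{(t)}$ is in $N(w_1)$ by the inductive hypothesis, so that even an $L_2$-regularized objective preserves the invariant. Thus every iterate, and therefore the limit, remains in $N(w_1)$. Convexity of $\mathcal{L}$ is what lets me identify this SGD limit with the minimizer $w_2 = \argmin_w \mathcal{L}(w; P_1 X; Z)$ of step 4, so that the object named $w_2$ is indeed orthogonal to $w_1$.

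The main obstacle is making the gradient-in-span claim airtight: it hinges on the loss being a function of $w$ solely through the margins $w \cdot x_i'$, which holds for the standard classification losses (logistic, hinge) used here but would need restating for any loss that couples $w$ to the data differently. A secondary point to check is that the $L_2$ term does not break the invariant, handled above since $w \in N(w_1)$ implies $\lambda w \in N(w_1)$, and that the subgradient of a nonsmooth loss such as hinge still lies in the data span (it does, being a convex combination of the offending $x_i'$). Everything else is the straightforward induction and the identification of $N(w_1)$ with $w_1^{\perp}$.
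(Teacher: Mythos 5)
Your proposal is correct and follows essentially the same route as the paper's proof: both arguments show by induction that every SGD iterate of $w_2$, starting from $0$, remains a linear combination of the projected inputs $P_1 x_i \in N(w_1)$, and hence lies in $N(w_1) = w_1^{\perp}$. Your version is slightly more careful in two respects the paper glosses over --- attributing the gradient-in-span property to the loss depending on $w$ only through the margins $w \cdot x_i'$ (rather than to convexity itself), and checking that an $L_2$ regularization term preserves the invariant --- but these are refinements of the same argument, not a different one.
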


\begin{proof}
% This property is derived from line 6 of the algorithm. The transition from line 4 to line 5 of the algorithm stems from the linear interaction between $w$ and $X'=P_1X$, which enables first multiplying $w$ by $P_1$, then multiplying the product by $X$. This suggests that there exists a $w_2' \in N(w_1)$\footnote{when $P_1$ is an orthogonal projection, it is symmetric, and $wP_1 \in = RowSpace(P_1) = ColumnSpace(P_1) = N(w_1)$} which yields the same minimum value of $\mathcal{L}$. For a convex $\mathcal{L}$, the minimum is unique, which entails $w_2 = w_2'$, hence $w_2 \in N(w_1)$\footnote{An alternative construction relies on the Representer theorem, which allows expressing $w_2$ as a weighted sum of the inputs: $w_2 = \sum_{x_i \in X'=P_1X}^{} \alpha_ix_i$, for some scalars $\alpha_i$. As each $x_i$ is inside the nullspace, so is any linear combinations of them, and in particular $w_2$.}.  We proceed to prove commutativity based on this property.
In line 4 of the algorithm, we calculate $w_2 = \operatorname*{argmin}_w \mathcal{L}(w; X'=P_1X; Z)$. For a convex $\mathcal{L}$ and a linear model $w$, it holds that the gradient with respect to $w$ is a linear function of $x$: $\nabla_w{}\mathcal{L}(x_i)=\alpha_ix_i$ for some scalar $\alpha_i$. It follows that after $t$ stochastic SGD steps, $w_2^t$ is a linear combination of input vectors $x_1, \dots, x_i, \dots, x_t$. Since we constrain the optimization to $x_i \in N(w_1)$, and considering that fact the nullspace is closed under addition, at each step $t$ in the optimization it holds that $w_2^t \in N(w_1)$. In particular, this also holds for the optimal $w_2^*$ \footnote{If we performed proper dimensionality reduction at stage 3 -- i.e., not only zeroing some directions, but completely removing them -- the optimization in 4 would have a unique solution, as the input would not be rank-deficient. Then, we could use an alternative construction that relies on the Representer theorem, which allows expressing $w_2$ as a weighted sum of the inputs: $w_2 = \sum_{x_i \in X'=P_1X}^{} \alpha_ix_i$, for some scalars $\alpha_i$. As each $x_i$ is inside the nullspace, so is any linear combinations of them, and in particular $w_2$.}. \end{proof}
We proceed to prove commutativity based on this property.

%Note that due to the linear interaction between $w$ and $X'=P_1X$, $w_2$ can be equivalently expressed as $\operatorname*{argmin}_w \mathcal{L}(wP_1; X; Z)$. Since $wP_1 \in N(w_1)$ \footnote{when $P_1$ is an orthogonal projection, it is symmetric, and $wP_1 \in = RowSpace(P_1) = ColumnSpace(P_1) = N(w_1)$}, there exists a $w_2' \in N(w_1)$ which yields the same minimum value of $\mathcal{L}$. For a convex $\mathcal{L}$, the minimum is unique, which entails $w_2 = w_2'$. In other words, the optimization in 4 can be seen as constrained from the first place only to $w$'s within the nullsapce: $w_2=\operatorname*{argmin}_{w \in N(w_1)} \mathcal{L}(w; X; Z)$. hence, $w_2 \in N(w_1)$\footnote{An alternative construction relies on the Representer theorem, which allows expressing $w_2$ as a weighted sum of the inputs: $w_2 = \sum_{x_i \in X'=P_1X}^{} \alpha_ix_i$, for some scalars $\alpha_i$. As each $x_i$ is inside the nullspace, so is any linear combinations of them, and in particular $w_2$.}. We proceed to prove commutativity based on this property.

\begin{corollary}
\label{lem:comm}
$P_1P_2 = P_2P_1$
\end{corollary}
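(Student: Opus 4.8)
The plan is to make the orthogonality from Lemma \ref{lem:orthogonality} do all the work by writing the two nullspace projections in closed form and expanding their product. In the binary setting each classifier $w_i$ is a single nonzero vector, so its nullspace $N(w_i) = \{x \in \R^d : w_i^T x = 0\}$ is the hyperplane orthogonal to $w_i$, and the orthogonal projection onto it is $P_i = I - \frac{w_i w_i^T}{\|w_i\|^2}$. First I would substitute these expressions into $P_1 P_2$ and multiply out.

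The expansion gives
\begin{equation*}
P_1 P_2 = I - \frac{w_1 w_1^T}{\|w_1\|^2} - \frac{w_2 w_2^T}{\|w_2\|^2} + \frac{(w_1^T w_2)\, w_1 w_2^T}{\|w_1\|^2\,\|w_2\|^2},
\end{equation*}
where the final term comes from the product of the two rank-one pieces and has picked up the scalar factor $w_1^T w_2$. The crucial step is then immediate: Lemma \ref{lem:orthogonality} gives $w_1^T w_2 = w_1 \cdot w_2 = 0$, so this cross term vanishes and $P_1 P_2 = I - \frac{w_1 w_1^T}{\|w_1\|^2} - \frac{w_2 w_2^T}{\|w_2\|^2}$.

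Finally I would observe that the resulting expression is manifestly symmetric under exchanging the indices $1$ and $2$; carrying out the identical computation for $P_2 P_1$ produces the very same matrix, whence $P_1 P_2 = P_2 P_1$. There is essentially no obstacle here beyond bookkeeping: the only points to get right are that $P_i$ projects onto the \emph{null}space (so it is $I$ minus the rank-one projector onto $\mathrm{span}(w_i)$, not the projector itself) and that the cross term is a genuine scalar multiple that orthogonality annihilates. I would also note that the argument extends verbatim to the non-binary case by writing $P_i = I - P_{R(W_i)}$: when the rowspaces are orthogonal one has $P_{R(W_1)} P_{R(W_2)} = P_{R(W_2)} P_{R(W_1)} = 0$, and the same cancellation yields commutativity.
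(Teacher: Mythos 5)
Your proposal is correct and takes essentially the same route as the paper: both write $P_i = I - P_{R(w_i)}$, expand the product, and use Lemma \ref{lem:orthogonality} to annihilate the cross term $P_{R(w_1)}P_{R(w_2)}$ (your closing remark about the general $I - P_{R(W_i)}$ form is literally the paper's argument, and your explicit rank-one formula $\frac{w_iw_i^T}{\|w_i\|^2}$ is just the binary-case instantiation of it). You even get the sign of the cross term right ($+$), which the paper's own displayed expansion writes with a spurious minus sign that happens not to matter since the term vanishes.
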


\begin{proof}
By Lemma \ref{lem:orthogonality}, $w_1 \cdot w_2 = 0$, so $P_{R(w_1)}P_{R(w_2)} = P_{R(w_2)}P_{R(w_1)} = 0$, where $P_{R(w_i)}$ is the projection matrix on the row-space of $w_i$. We rely on the relation $P_{N(w_i)} = I - P_{R(w_i)}$ and write:

$P_1P_2 = (I-P_{R(w_1)})(I-P_{R(w_2)}) = I - P_{R(w_1)} - P_{R(w_2)} - P_{R(w_1)}P_{R(w_2)} = I - P_{R(w_1)} - P_{R(w_2)}$.

Similarly,

$P_2P_1 = (I-P_{R(w_2)})(I-P_{R(w_1)}) = I - P_{R(w_2)} - P_{R(w_1)} - P_{R(w_2)}P_{R(w_1)} = I - P_{R(w_1)} - P_{R(w_2)}$, which completes the proof.
\end{proof}

\begin{corollary}
\label{lem:is-projection}
$P=P_2P_1$ is a projection, that is, $P^2 = P$.
\end{corollary}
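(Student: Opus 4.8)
The plan is to reduce everything to the two facts already in hand: the commutativity relation $P_1 P_2 = P_2 P_1$ from Corollary~\ref{lem:comm}, and the idempotence of each individual orthogonal projection, $P_1^2 = P_1$ and $P_2^2 = P_2$ (which holds because each $P_i = P_{N(w_i)}$ is an orthogonal projection onto a nullspace). Recall that an operator is a projection precisely when it is idempotent, so the entire task is to verify $P^2 = P$; no further structural check is needed.

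First I would simply expand the square:
\begin{equation}
P^2 = (P_2 P_1)(P_2 P_1) = P_2 (P_1 P_2) P_1.
\end{equation}
Then I would rewrite the middle pair using commutativity, replacing $P_1 P_2$ with $P_2 P_1$, which gives $P^2 = P_2 (P_2 P_1) P_1 = P_2^2 P_1^2$. Applying idempotence of each factor, $P_2^2 = P_2$ and $P_1^2 = P_1$, collapses this to $P_2 P_1 = P$, as desired.

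There is really no hard step here: the genuine content was already absorbed into Lemma~\ref{lem:orthogonality} (orthogonality of the successive classifiers) and Corollary~\ref{lem:comm} (commutativity of the projection matrices), and this statement is an immediate corollary. If anything, the only point worth stating explicitly is \emph{why} commutativity is the crucial ingredient: in general a product of two projections need not be a projection, and it is exactly the fact that $P_1$ and $P_2$ commute — a consequence of the orthogonality $w_1 \cdot w_2 = 0$ — that rescues idempotence. I would close by noting that the general case $P = P_{N(w_n)}\cdots P_{N(w_1)}$ follows by the same argument together with the induction scheme announced after the two-iteration setup, since all the factors pairwise commute by the orthogonality lemma.
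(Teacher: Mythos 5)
Your proposal is correct and follows exactly the paper's argument: expand $(P_2P_1)^2$, use the commutativity from Corollary~\ref{lem:comm} to obtain $P_2^2P_1^2$, and then apply the idempotence of each individual projection to collapse this to $P_2P_1=P$. The extra remarks on why commutativity is the essential ingredient and on extending to $n$ factors by induction are consistent with what the paper states elsewhere.
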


\begin{proof}
$P^2 = (P_2P_1)^2 = P_2P_1P_2P_1 \stackrel{*}{=} P_2P_2P_1 P_1 = P_2 ^2P_1 ^2  \stackrel{**}{=} P_2P_1=P$, where $*$ follows from Corollary \ref{lem:comm} and $**$ follows from $P_1$ and $P_2$ being projections. 
\end{proof}

\begin{corollary}
\label{lem:is-projection-to-intersection}
$P_2P_1$ is a projection onto $N(w_1) \cap N(w_2)$.
\end{corollary}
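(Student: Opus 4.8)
The plan is to lean on Corollary \ref{lem:is-projection}, which already gives that $P = P_2 P_1$ is idempotent. A linear operator is ``a projection onto a subspace $S$'' precisely when it is idempotent and its image equals $S$. Since idempotency is in hand, the entire task reduces to the single set equality $\operatorname{Im}(P_2 P_1) = N(w_1) \cap N(w_2)$, which I would prove by double inclusion.

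For the inclusion $\operatorname{Im}(P_2 P_1) \subseteq N(w_1) \cap N(w_2)$, take an arbitrary $x$ and consider $Px = P_2 P_1 x$. Because $P_2 = P_{N(w_2)}$ projects onto $N(w_2)$, the vector $P_2(P_1 x)$ lies in $N(w_2)$. To also place it in $N(w_1)$ I would invoke commutativity (Corollary \ref{lem:comm}) to rewrite $P_2 P_1 x = P_1 P_2 x$; since $P_1 = P_{N(w_1)}$ projects onto $N(w_1)$, this lies in $N(w_1)$. Hence $Px \in N(w_1) \cap N(w_2)$.

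For the reverse inclusion $N(w_1) \cap N(w_2) \subseteq \operatorname{Im}(P_2 P_1)$, I would show that $P$ fixes every vector of the intersection, which immediately exhibits such a vector as an element of the image. Let $v \in N(w_1) \cap N(w_2)$. An orthogonal projection restricts to the identity on its target subspace, so $P_1 v = v$ (as $v \in N(w_1)$) and $P_2 v = v$ (as $v \in N(w_2)$). Therefore $Pv = P_2 P_1 v = P_2 v = v$, so $v = Pv \in \operatorname{Im}(P)$. Combining the two inclusions yields $\operatorname{Im}(P) = N(w_1) \cap N(w_2)$, and together with idempotency this is exactly the claim.

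The only mildly delicate step is the first inclusion: membership in $N(w_2)$ is immediate from the outermost factor $P_2$, but membership in $N(w_1)$ is not visible from the expression $P_2 P_1 x$ as written and genuinely relies on the commutativity result of Corollary \ref{lem:comm}. This is the spot where a careless argument would stall, so I would make sure the commutation is stated explicitly before concluding. Everything else is bookkeeping that reuses facts already proved in the preceding corollaries; the general $n$-iteration statement then follows by the same induction mentioned for the two-step case.
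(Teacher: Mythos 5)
Your proposal is correct and follows essentially the same route as the paper: both establish $\operatorname{Im}(P_2P_1)\subseteq N(w_1)\cap N(w_2)$ by using the outer factor $P_2$ for membership in $N(w_2)$ and the commutativity of Corollary~\ref{lem:comm} for membership in $N(w_1)$, and both get the reverse inclusion by showing $P$ fixes every vector of the intersection. Your version is slightly more explicit in stating that idempotency (Corollary~\ref{lem:is-projection}) plus the image identification is what makes this ``a projection onto'' the subspace, but the substance is identical.
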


\begin{proof}
Let $x \in \R^n$. $P_2(P_1x) \in N(w_2)$, as $P_2$ is the projection matrix to $N(w_2)$. Similarly, $P_2(P_1x)=P_1(P_2x) \in N(w_1)$, so $Px \in N(w_1) \cap N(w_2)$. Conversely, let $x \in N(w_1) \cap N(w_2)$. Then $P_1x = x = P_2x$, so $Px = P_2P_1x = P_2x = x$, so $x$ is mapped by $P$ to $N(w_1) \cap N(w_2)$.

\end{proof}

Note that in practice, we enforce Corollary \ref{lem:is-projection-to-intersection} by using the projection Equation \ref{ben-israel-formula} (section \ref{implementaton}). As such, the matrix $P$ that is returned from Algorithm \ref{algo:projection} is a valid projection matrix to the intersection of the nullspaces even if the the conditions in Lemma \ref{lem:orthogonality} do not hold, e.g. when $ \mathcal{L}$ is nonconvex or $w_2$ is not initialized as the zero vector.

\textbf{INLP Approximately Preserves Distances.}

While the projection operations removes the protected information from the representations, ostensibly it could have had a detrimental impact on the structure of the representations space: as a trivial example, the zero matrix $\boldsymbol{O}$ is another operator that removes the protected information, but at a price of collapsing the entire space into the zero vector. The following lemma demonstrate this is not the case. The projection minimally damages the structure of the representation space, as measured by distances between arbitrary vectors: the change in (squared) distance between $x,x' \in \R^n$ is bounded by the difference between the ``gender components" of $x$ and $x'$.

\label{distance-preservation}

\begin{lemma}
\label{lem:distances}
Let $\vv{w} \in \R^n$ be a unit gender direction found in one INLP iteration, and let $x,x' \in \R^n$ be arbitrary input vectors. Let $P:=P_{N(\vv{w})}$ be the nullspace projection matrix corresponding to $\vv{w}$.  Let  $d(x,x') := ||x-x'||$ and $d^{P}(x,x') := ||Px-Px'||$ be the distances between $x,x'$ before and after the projection, respectively. Then the following holds:

$( d(x,x') -d^{P}(x,x'))^2 \leq (x \vv{w} - x' \vv{w})^2$
\end{lemma}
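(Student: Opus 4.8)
The plan is to reduce everything to the single difference vector $u := x - x'$ and then exploit the fact that $P$ is an \emph{orthogonal} projection. Since $P$ is linear, $Px - Px' = P(x - x') = Pu$, and likewise $x\vv{w} - x'\vv{w} = u\vv{w}$. Hence $d(x,x') = \|u\|$, $d^{P}(x,x') = \|Pu\|$, and the claimed inequality collapses to the statement $(\|u\| - \|Pu\|)^2 \le (u\vv{w})^2$, which involves only $u$.

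Next I would make the projection explicit. Because $\vv{w}$ is a unit vector, the orthogonal projection onto its nullspace (the hyperplane $\vv{w}^{\perp}$) is $P = I - \vv{w}\vv{w}^{T}$, which yields the orthogonal decomposition $u = Pu + (u\vv{w})\vv{w}$ into its component in $\vv{w}^{\perp}$ and its component along $\vv{w}$. By the Pythagorean theorem, applied to these two orthogonal pieces and using $\|\vv{w}\| = 1$, one gets $\|u\|^2 = \|Pu\|^2 + (u\vv{w})^2$, i.e.\ the key identity $(u\vv{w})^2 = \|u\|^2 - \|Pu\|^2$. In particular this shows $\|Pu\| \le \|u\|$, since $(u\vv{w})^2 \ge 0$.

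It then remains only to verify the purely algebraic inequality $(a-b)^2 \le a^2 - b^2$, where $a := \|u\|$ and $b := \|Pu\|$ satisfy $a \ge b \ge 0$. This follows at once by expanding the difference: $a^2 - b^2 - (a-b)^2 = 2b(a-b) \ge 0$, since both $b \ge 0$ and $a - b \ge 0$. Substituting back $(u\vv{w})^2 = a^2 - b^2$ and $a - b = d(x,x') - d^{P}(x,x')$ gives exactly the claimed bound.

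I do not expect a genuine obstacle here; the argument is a short chain of elementary facts. The only point that needs care is the orthogonal decomposition step — getting $P = I - \vv{w}\vv{w}^{T}$ and the Pythagorean identity right — together with the observation that the nonnegativity $b \ge 0$ and the norm-contraction $a \ge b$, both direct consequences of $P$ being an orthogonal projection, are precisely what make the final one-line inequality $2b(a-b)\ge 0$ go through.
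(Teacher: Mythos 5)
Your proof is correct, and it rests on the same underlying facts as the paper's: the Pythagorean identity $\|u\|^2 = \|Pu\|^2 + (u\vv{w})^2$ for $u = x - x'$, followed by an elementary scalar inequality. The only real difference is presentational: the paper establishes that identity by rotating coordinates so that $\vv{w}$ becomes the first axis and then reasoning entry-by-entry (and uses the equivalent inequality $\sqrt{a-b}\ge\sqrt{a}-\sqrt{b}$), whereas you obtain it coordinate-free from the orthogonal decomposition $u = Pu + (u\vv{w})\vv{w}$ with $P = I - \vv{w}\vv{w}^{T}$; your route is shorter and avoids the rotation machinery entirely, but it is essentially the same argument.
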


\begin{proof}

\textbf{notation}: we denote the $i$th entry of a vector $x$ by  $x_i$. %$i$th coordinate of vector $x$.% $[x]_U^i$ means the $i$th coordinate of vector $x$ in basis $U$. Similarly, for a matrix $P$, $[P]_U$ signifies the linear transformation of $P$, expressed in basis $U$. \\

Since $\vv{w}$ is the parameter vector of a gender classifier, a point $x \in \R^n$ can be classified to a gender $g \in \{0,1 \}$ according to the sign of the dot product $x\vv{w}$. Note that in the binary case, the nullspace projection matrix $P$ is given by

\begin{equation}
    P = I - \vv{w}\vv{w}^T
\end{equation}

Where $\vv{w}\vv{w}^T$ is the outer product. By definition, if $\vv{w}$ is in the direction of one of the axes, say without loss of generality the first axis, such that $\vv{w} = [1, 0, ..., 0]$, then the following holds:

\begin{equation}
\vv{w}\vv{w}^T = \begin{bmatrix} 
    1 & 0 & \dots & 0 \\
    \vdots & 0  & \\
    &  & \ddots & \\
    0 &    \dots  &  & 0 
    \end{bmatrix}
\end{equation}

Such that $\vv{w}\vv{w}^T$ is the zero matrix except its $(1,1)$ entry, and then $P$ is simplified to 

% \begin{equation}
% P = \begin{bNiceMatrix}
% 0 & \Cdots & & & 0 \\
% \Vdots & 1 & \Cdots & & 0 \\
% & \Vdots & \Ddots \\ \\
% 0 & 0 & & & 1
% \end{bNiceMatrix}
% \label{p-simflieid}
% \end{equation}

\begin{equation}
P = \begin{bmatrix} 
    0 & 0 & \dots & 0 \\
    \vdots & 1 & 0 \dots & 0 \\
    0 &  & \ddots & \\
    0 &    \dots  &  & 1 
    \end{bmatrix}
\label{p-simflieid}
\end{equation}

I.e, the unit matrix, except of a zero in the $(1,1)$ position. Hence, the projection operator $Px$ keeps $x$ intact, apart from zeroing the first coordinate $x_1$. We will take advantage of this property, and rotate the axes such that $\vv{w}$ is the direction of the first axis. We will show that the results we derive this way still apply to the original axes system.

Let $R$ be a rotation matrix, such that after the rotation, the first coordinate of $Rx$ is aligned with $\vv{w}$:

\begin{equation}
  {(Rx)}_1 =x\vv{w}
\end{equation}

One can always find such rotation of the axes. Let $x' \in \R^n$ be another point in the same space. Given the original squared distance between $x$ and $x'$:

\begin{equation}
    d(x,x') = ||x-x'||^2
\end{equation}

Our goal is to bound the squared distance between the projected points in the new coordinate system:

\begin{equation}
    d^{P, R}(x,x') := ||[P]_RRx-[P]_RRx'||^2 
\end{equation}

Where $[P]_R$ denotes the projection matrix $P$ in the rotated coordinate system, which takes the form \ref{p-simflieid}.

Note that $R$, being a rotation matrix, is orthogonal. By a known result in linear algebra, multiplication by orthogonal matrices preserves dot product and distances. That means that the distance is the same before and after the rotation: $d^{P,R}(x,x') = d^{P}(x,x')$, so we can safely bound $d^{P,R}(x,x')$ and the same bound would hold in the original coordinate system.\\

\newpage
\clearpage

By \ref{p-simflieid}, 

\begin{align}
    %\resizebox{0.99\textwidth}{!}{ 
 %   \begin{split}
  %  $
    &d^{P, R}(x,x') \notag \\ \label{dist-projected}
    &=\sqrt{(0 - 0)^2 + \sum_{i=2}^n ([Rx]_i - [Rx']_i)^2 } % \\ &=\sqrt{(([Rx]_1 - [Rx']_1)^2 -([Rx]_1 - [Rx']_1)^2) + ([Rx]_2 - [Rx']_2)^2 + ... + ([Rx]_n - [Rx']_n)^2}
    \\  &= \sqrt{d(x,x')^2 - ([Rx]_1-[Rx']_1)^2} \notag
   % $
%    \end{split}
    %}
\end{align}

Note that in general it holds that for any $a \geq b \geq 0$

\begin{align}
    \sqrt{a - b} &= \sqrt{a +b - 2b} = \sqrt{a + b - 2\sqrt{b}\sqrt{b}} \label{sqrt-diff-squares} \\ \notag{}
    & \geq \sqrt{a + b - 2\sqrt{a}\sqrt{b}} = \sqrt{(\sqrt{a}-\sqrt{b})^2}  \\ \notag{}
    &= \sqrt{a} - \sqrt{b} 
\end{align}

Combining \ref{sqrt-diff-squares} with \ref{dist-projected} when taking $a = d(x,x')^2, b = ([Rx]_1-[Rx']_1)^2$ we get:

\begin{equation}
    d^{P,R}(x,x') \geq d(x,x') - |([Rx]_1-[Rx']_1)|
    \label{greater-equal}
\end{equation}

From \ref{dist-projected} one can also trivially get 

\begin{align}
     d^{P,R}(x,x') &= \sqrt{d(x,x')^2 - ([Rx]_1-[Rx']_1)^2)} \label{less-equal} \\ \notag
     &\leq  \sqrt{d(x,x')^2} = d(x,x')
\end{align}
 
 Combining \ref{less-equal} and \ref{greater-equal} we finally get:
 
 \begin{align}
      d(x,x') - |([Rx]_1-[Rx']_1)| &\leq  d^{P,R}(x,x') \\ \notag &< d(x,x')
 \end{align}
 
 Or, equivalently, after subtracting $d(x,x')$ from all elements and multiplying by -1:
 
 \begin{align*}
     |([Rx]_1-[Rx']_1)| &= |x \vv{w} - x'\vv{w}| \notag \\ & \geq d(x,x') - d^{P,R}(x,x')  \geq 0 
 \end{align*}
 
 So 
 
 \begin{align*}
     &( d(x,x') - d^{P,R}(x,x'))^2  \\
     & \leq ([Rx]_1-[Rx']_1)^2  = (x \vv{w} - x' \vv{w})^2
 \end{align*}
 
 %\QEDB
 
 Note that this result has a clear interpretation: the difference between the distance of the projected $x,x'$ and the distance of the original $x, x'$ is bounded by the difference of $x$ and $x'$ in the gender direction $\vv{w}$. In particular, if $x$ and $x'$ are equally male-biased, their distance would not change at all; if $x$ is very male-biased and $x'$ is very female-biased, the projection would significantly alter the distance between them. 
 
 \end{proof}

\newpage
\clearpage

\newpage
\clearpage

\subsection{Influence on Local Neighbors in Glove Space}
\label{sec:appendix-neighbors-examples}

% \begin{center}
\begin{table}[H]
%\centering
\begin{tabular}{lll}
\toprule
         Word &                      Neighbors before &                            Neighbors after \\
\midrule
        order &            orders, ordering, purchase &                  orders, ordering, ordered \\
        crack &               keygen, cracks, torrent &                      keygen, cracks, warez \\
   craigslist &                  ebay, craiglist, ads &                 ebay, craiglist, freecycle \\
  populations &      population, species, communities &              population, species, habitats \\
         epub &                      ebook, mobi, pdf &                        mobi, ebook, kindle \\
       finals &    semifinals, playoffs, championship &       semifinals, semifinal, quarterfinals \\
    installed &     install, installing, installation &              install, installing, installs \\
 identifiable &       disclose, identify, identifying &             disclose, pii, distinguishable \\
  photographs &            photograph, photos, images &                 photograph, images, photos \\
           ta &                            si, tu, ti &                             que, bien, ele \\
        couch &                    sofa, sitting, bed &                    sofa, couches, loveseat \\
       cooler &              coolers, cooling, warmer &                   coolers, cooling, warmer \\
        becky &                  debbie, kathy, julie &                         debbie, steph, jen \\
  appreciated &           appreciate, greatly, thanks &                 appreciate, muchly, thanks \\
  negotiation &  negotiating, negotiations, mediation &       negotiating, negotiations, mediation \\
      initial &          subsequent, prior, following &                 intial, inital, subsequent \\
        chloe &                  chanel, emma, lauren &                    chloé, chanel, handbags \\
     filipino &          pinoy, filipinos, philippine &                  filipinos, pinoy, tagalog \\
      relying &                  rely, relied, relies &                       rely, relied, relies \\
    perpetual &       eternal, continual, irrevocable &          irrevocable, datejust, perpetuity \\
      himself &                     him, herself, his &                       herself, oneself, he \\
      seaside &         beach, beachside, picturesque &               beachside, idyllic, seafront \\
      measure &         measures, measuring, measured &              measures, measuring, measured \\
    yorkshire &      staffordshire, leeds, lancashire &           staffordshire, dales, lancashire \\
  merchandise &                 goods, items, apparel &                  goods, items, merchandize \\
          sub &                          subs, k, def &                          subs, subbed, svs \\
        tones &                     tone, hues, muted &                    tone, polyphonic, muted \\
    therapist &     therapists, psychologist, therapy &  therapists, physiotherapist, psychologist \\
       leaned &               sighed, smiled, glanced &                     leant, leaning, sighed \\
          tho &                        nnd, cuz, tlie &                            nnd, tlio, tlie \\
      lawyers &           attorneys, lawyer, attorney &                attorneys, lawyer, attorney \\
      compile &         compiling, compiler, compiles &              compiling, compiler, compiles \\
        chord &          chords, progressions, guitar &             chords, progressions, voicings \\
         aims &                    aim, aimed, aiming &                         aim, aimed, aiming \\
       ensure &             ensuring, assure, ensures &                  ensuring, ensures, assure \\
    aerospace &     aviation, engineering, automotive &        aeronautics, aviation, aeronautical \\
    clubhouse &           pool, playground, amenities &               clubhouses, pool, playground \\
      locking &                    lock, locks, latch &                         lock, locks, latch \\
        reign &               reigns, emperor, throne &                   reigns, reigned, emperor \\
   vulnerable &        susceptible, fragile, affected &            susceptible, vunerable, fragile \\
\bottomrule
\end{tabular}

\caption{3-nearest words before and after the INLP projection}
\label{table:nearest_words}
\end{table}
% \end{center}

\newpage
\clearpage

Table \ref{table:nearest_words} above presents the results of word-embeddings similarity test mentioned in \ref{sec:debiasing-word-embeddings}. This table lists the top 3-nearest neighbors of sampled words from GloVe, before and after the INLP process. It is evident that INLP does not alter the neighbors of the random sample in a detrimental way.

% \newpage
% \clearpage

\subsection{Quantitative Influence of Gender Debiasing on Glove Embeddings}
\label{sec:appendix-quantitative_emb}
In Appendix \ref{sec:appendix-neighbors-examples} we provide a sample of words to qualitatively evaluate the influence of INLP on semantic similarity in Glove word embeddings (Section \ref{sec:debiasing-word-embeddings}). We observe minimal change to the nearest neighbors. To complement this measure, we use a quantitative measure: measuring performance on established word-similarity tests, for the original Glove embeddings, and for the debiased ones. Those tests measure correlation between cosine similarity in embedding space and human judgements of similarity. 
Concretely, we test the embeddings similarities using three dataset, which contain four similarity tests that measure similarity or relatedness between words.
We use the following datasets: SimLex999 \cite{hill2015simlex}, WordSim353 \cite{agirre2009study} which contain two evaluations, on words similarity and relatedness and finally on Mturk-771 \cite{halawi2012large}.

The test sets are composed of word pairs, where each pair was annotated by humans to give a similarity or relatedness score.
To evaluate a model against such data, each pair is given a score (in the case of word embedding, cosine similarity) and then we calculate Spearman correlation between all the score pairs.
The results on the regular Glove embeddings before and after the gender debiasing are presented in Table \ref{tbl:word_emb_sim}.
We observe a major improvements across all evaluation sets after the projection: between 0.044 to 0.116 points.

This major difference in performance is rather surprising. It is not clear how to interpret the positive influence on correlation with human judgements. This puzzle is further compounded by the fact the projection reduces the rank of the embedding spaces, and by definition induces loss of information.
We hypothesize that many of the words in the embedding space contain a significant gender component, which is not correlated with humans judgements of similarity. While intriguing, testing this hypothesis is beyond the scope of this work, and we leave the more rigorous answer to a future work.

\subsection{Influence on Local Neighbors of Surnames Representations in Glove Space}
\label{sec:appendix-neighbors-examples-names}

\begin{table}[H]
\centering
\resizebox{\columnwidth}{!}{%
\begin{tabular}{lll}
\toprule
      Word &           Neighbors before &                         Neighbors after \\
\midrule
      ruth &    helen, esther, margaret &                  etting, esther, gehrig \\
 charlotte &       raleigh, nc, atlanta &                 raleigh, greensboro, nc \\
   abigail &       hannah, lydia, eliza &                   hannah, phebe, josiah \\
    sophie &         julia, marie, lucy &                  moone, bextor, marceau \\
   nichole &    nicole, kimberly, kayla &                    nicole, mya, heiress \\
      emma &         emily, lucy, sarah &                    grint, frain, watson \\
    olivia &         emma, rachel, kate &                    munn, thirlby, wilde \\
       ava &      devine, zoe, isabella &       viticultural, devine, appellation \\
  isabella &  sophia, josephine, isabel &           rossellini, beeton, ferdinand \\
    sophia &         anna, lydia, julia &               hagia, antipolis, topkapi \\
       mia &         bella, mamma, mama &                     bangg, mamma, culpa \\
    amelia &  earhart, louisa, caroline &            earhart, fernandina, bedelia \\
     james &      john, william, thomas &                jassie, nightfire, perse \\
      john &       james, william, paul &                deere, scatman, betjeman \\
    robert &    richard, william, james &  pattinson, mccammon, blacksportsonline \\
   michael &         david, mike, brian &               micheal, franti, moorcock \\
   william &       henry, edward, james &                      edward, henry, sir \\
     david &  stephen, richard, michael &                  bisbal, magen, sylvian \\
   richard &     robert, william, david &            clayderman, brautigan, rorty \\
    joseph &   francis, charles, thomas &               joesph, dreamcoat, abboud \\
    thomas &       james, william, john &                    szasz, deshaun, tomy \\
     ariel &      sharon, alexis, hanna &           peterpan, mermaid, cinderella \\
      mike &         brian, chris, dave &                mignola, birbiglia, dave \\
\bottomrule
\end{tabular}
}
\caption{3-nearest words before and after the INLP projection, for surenames}
\label{table:nearest_words-names}
\end{table}

The results in Table \ref{table:nearest_words} suggest that, as expected, the projection has little influence on the lexical semantics of unbiased words, as measured by their closest neighbors in embedding space. But how does the projection influence inherently gendered words? Table \ref{table:nearest_words-names} contains the closest-neighbors to the Glove representations of gendered surnames, before and after the projection. 
We observe an interesting tendency to move from neighbors which are other gendered surnames, towards family names, which are by definition gender-neutral (for instance, the closest neighbor of ``Robert" changes from ``Richard" to ``Pattinson"). Another interesting tendency is to move towards place names bearing a connection to that surnames (For instance, the closest neighbor of ``Sophia" changs to ``Hagia"). At the same time, some gendered surnames remain close neighbors even after the projection.

\subsection{Performance and ``Fair Classification'' as a Function of INLP Iterations}
\label{sec:appendix-acc_tpr_inlp}

\begin{table}[ht]
\resizebox{\columnwidth}{!}{%
\begin{tabular}{c|cc}

Eval  & Before  & After \\ \hline
SimLex999        & 0.373 & \textbf{0.489} \\
WordSim353 - Sim & 0.695 & \textbf{0.799} \\
WordSim353 - Rel & 0.599 & \textbf{0.698} \\
Mturk-771        & 0.684 & \textbf{0.728} \\
\end{tabular}

}
\caption{Word similarity scores on Glove embeddings, before and after INLP. The scores are the Spearman correlation coefficient between the similarity scores.}
\label{tbl:word_emb_sim}

\end{table}

In Section \ref{sec:deepmoji} where we compare the accuracy and TPR-Gap before and after using INLP for a certain amount of iterations.
The number of iterations chosen is somehow arbitrary, but we emphasize that this can be controlled for as the number of iterations used with INLP. By sacrificing the main task performance, one can improve the TPR-Gap of their model.
In Figure \ref{fig:acc-tpr} we detail these trade-offs for the $0.8$ ratio, where the original TPR-Gap originally is the highest.

We note that the performance is minimally damaged for the first 180 iterations, while the TPR-Gap improves greatly, after-which, both metric account for larger drops.
Using this trade-off, one can decide how much performance they are willing to sacrifice in order to get a less biased model.

\begin{figure}[H]
    \centering
    \includegraphics[width=1.0\columnwidth]{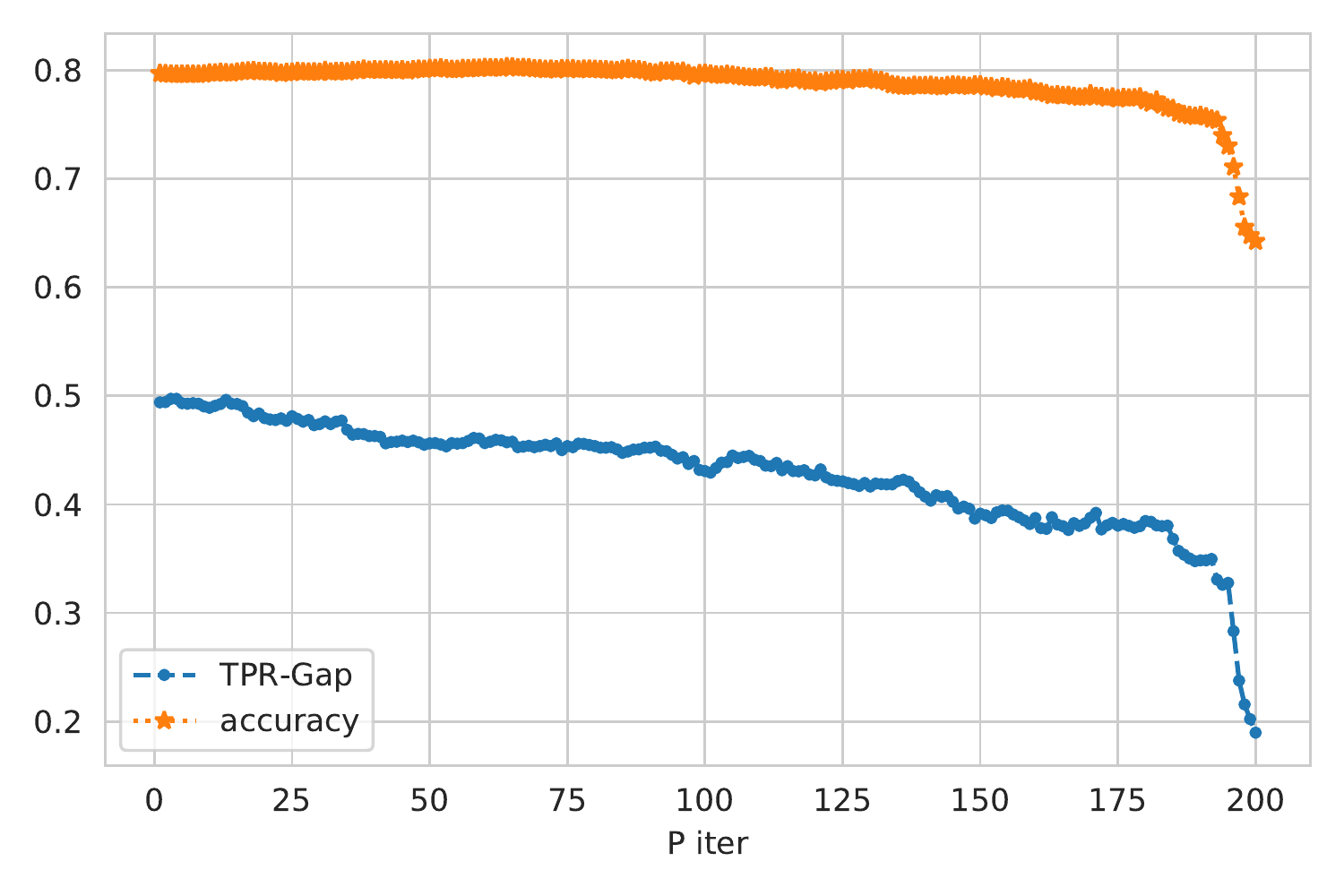}
    \caption{}
    \label{fig:acc-tpr}
\end{figure}

%\newpage
%\clearpage

\subsection{Biographies dataset: Words Most-Associated with Gender}
\subsubsection{Bag-of-Words Model}
\label{sec:appendix-bios-bag-of-words}
% \yg{These are the raw results of the experiments in section \ref{@@@}.}
In this section, we present the raw results of the experiment aimed to assess the influence of INLP on specific words, under the bag-of-words model, for the biographies experiments (Section \ref{sec:bios-results}).

Table \ref{table:word_change} lists the words most influenced by INLP projection (on average over all professions) after the debiasing procedure explained in Section \ref{sec:bios}.

Figure \ref{fig:rel-change}
presents the relative change of biased word for each profession, compared to a random sample.
\begin{center}
\begin{table}[H]
\centering
\begin{tabular}{|l|}
\hline
Most Changed Words \\ \hline
ms., mr., his, her, he, she, mrs., specializes, \\
english, practices, ',', him, spanish, \\
speaks, with, affiliated, and, medicine, ms, \\
state, \#, the, medical, michael, in, \\
residency, at, of, psychology, dr., ’s, \\
law, research, practice, about, where, \\
business, education, 5, -, is, first, \\
women, america, insurance, more, john, \\
university, location, ph.d., surgery, (, \\
mental, ), that, engineering, graduated, \\
language, bs, litigation, collection, \\
united, 1, graduate, humana, cpas, \\
cancer, npi, completed, 10, book, hospital, c, \\
out, family, or, when, oklahoma, certified, \\
ohio, number, training, for, like, a, \\
than, be, nursing, ], \_, can, writing, \\
patients, no, orthopaedic, attorney, \\
over, ny, mr, “,    \\ \hline
\end{tabular}

\caption{Top 100 words influenced by INLP projection (BOW representation, biographies dataset).}
\label{table:word_change}
\end{table}
\end{center}

\begin{figure}[H]
    \centering
    \includegraphics[width=1.0\columnwidth]{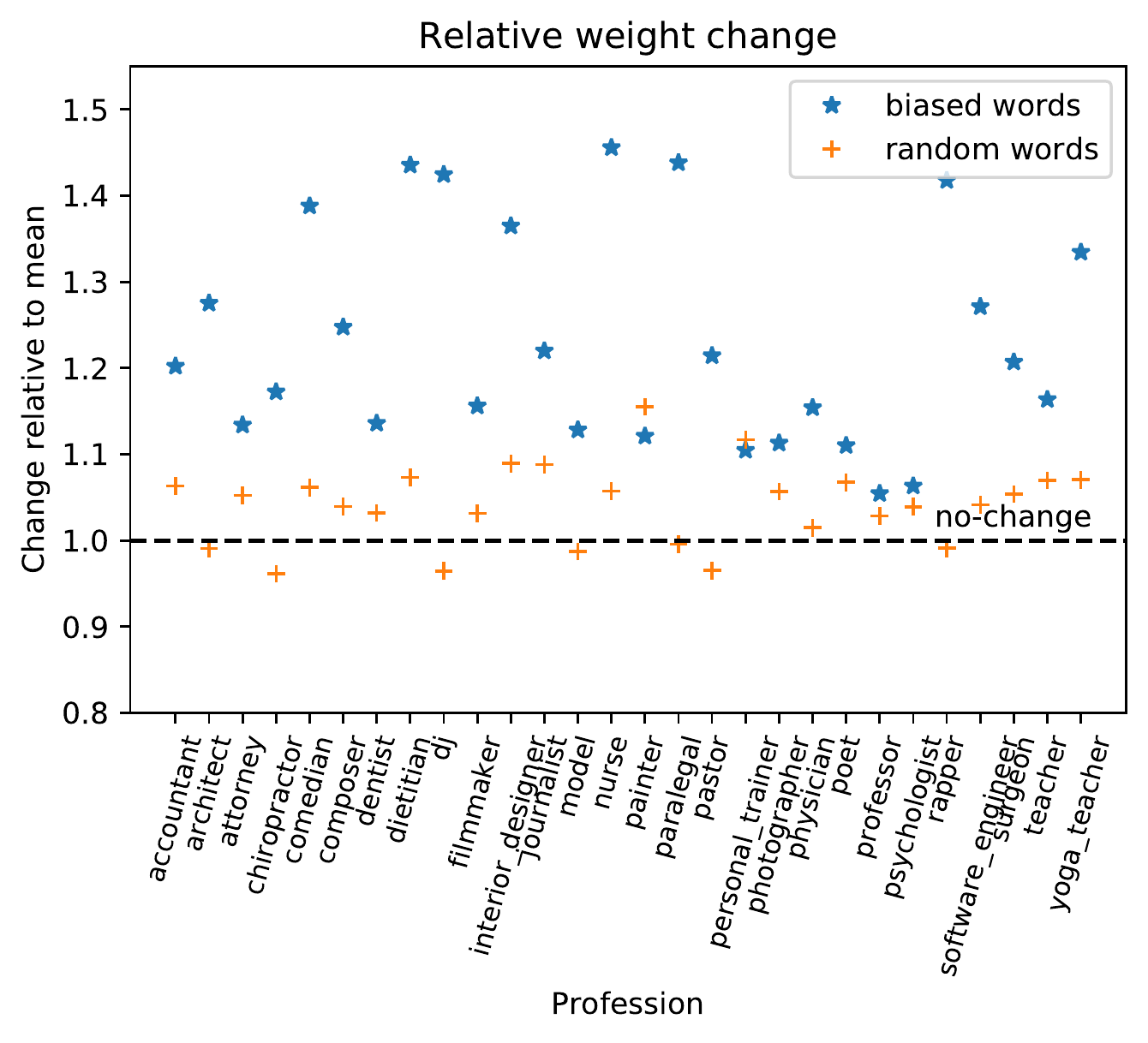}
    \caption{The relative change of biased vs. random words, per profession. }
    \label{fig:rel-change}
\end{figure}

% \newpage
% \clearpage

\subsubsection{Bag-of-Word-Vectors Model}
\label{sec:appendix-bios-bag-of-vectors}

\begin{table*}[ht]
\centering
\resizebox{\textwidth}{!}{%
\begin{tabular}{cll}
\toprule
 Gender direction &                                      Male-biased &                                 Female-biased \\
\midrule
 0 &  his, he, His, himself &  herself, she, She, her \\
 1 &  himself, him, Gavin, His &  Hatha, midwifery, Midwifery, feminist \\
 2 &  Mark, Jon, Darren, Luke &  Actress, Zumba, Diana, woman \\
 3 &  Gordon, he, wind, charge &  hers, recipe, Challenge, cookbooks \\
 4 &  1935, 1955, namely, 1958 &  Roots, Issue, FHM, yoga \\
 5 &  M, Mickey, KS, Bethesda &  Vietnam, Subject, Elle, Ecuador \\
 6 &  Keys, correct, address, fuel &  leap, Embedded, textile, femininity \\
 7 &  Papers, Categories, wherein, Newark &  Botox, LASIK, periodontal, UnityPoint \\
 8 &  binding, closely, MT, command &  Aventura, brunette, HTML, Disclosure \\
 9 &  82, 92, 91, 86 &  ASP.NET, committer, Twilight, Seth \\
 10 &  t, Cisco, Philips, Sharp &  preschool, caregivers, homeowners, Preschool \\
 11 &  Toulouse, Aviv, scored, commended &  intersectional, Equality, equality, ASME \\
 12 &  addressing, segment, inequalities, segments &  Wire, loose, anything, Vincents \\
 13 &  comparison, Hart, 480, refereed &  Matthew, independence, couples, LGBTQ \\
 14 &  manufacturer, organizers, scope, specifications &  homeschooling, ligament, loyalty, graduating \\
\bottomrule
\end{tabular}
}
\caption{words closest to top 15 INLP gender directions (FastText representation, biographies dataset).}
\label{table:nearest-fasttext-gender-bios}
\end{table*}

In this section, we present an analysis for the influence of INLP projection on the FastText representation of individual words, under the bag-of-word-vectors model, for the biographies experiments (Section \ref{sec:bios-results}). We begin by ordering the vocabulary items by their cosine similarity to each of the top 15 gender directions found in INLP (i.e., their similarity to the weight vector of each classifier). For each gender direction $w_i$, we focus on the 20,000 most common vocabulary items, and calculate the closest words to $w_i$ (to get male-biased words) as well as the closest words to $-w_i$ (to get female-biased words). The result are presented in Table \ref{table:nearest-fasttext-gender-bios}.

The first gender direction seems to capture pronouns. Other gender directions capture socially biased terms, such as ``preschool" (direction 10), ``cookbooks" (direction 3) or other gender-related terms, such as ``LGBTQ" (direction 15) or ``femininity" (direction 6). Interestingly, those are mostly female-biased terms. As for the male-biased words, some directions capture surnames, such as ``Gordon" and ``Aviv". Other words which were found to be male-biased are less interpretable, such as words specifying years (direction 4), organizational terms such as ``Organizers", ``specifications" (direction 14), or the words ``Papers", ``Categories" (direction 7). It is not clear if those are the result of spurious correlations/noise, or whether they reflect actual subtle differences in the way the biographies of men and women are written.

% \newpage
% \clearpage

\paragraph{Gender rowspace} The above analysis focuses on what information do individual gender directions convey. Next, we aim to demonstrate the influence of the final INLP projection on the representation of words. To this end, we rely on the \emph{rowspace} of the INLP matrix $P$. Recall that the rowspace is the orthogonal complement of the nullspace. As the INLP matrix $P$ projects to the intersection of nullspaces of the gender directions, the complement $P_R := I - P$ projects to the union of rowspaces of individual gender directions. This is a subspace which is spanned by all gender directions, and thus can be thought of as an empirical gender subspace within the representation space.

For a given word vector $w$, the ``gender norm" -- the norm of its projection on the rowspace, $||P_Rw||$ -- is a scalar quantity which can serve as a measure for the gender-bias of the word. We sort the vocabulary by the ratio between the gender norm and the original norm, $\frac{||P_Rw||}{||w||}$ and present the 200 most gendered words (Table \ref{table:top-gendered-rowspace}).

\begin{center}
\begin{table}[ht]
% \centering
\resizebox{\columnwidth}{!}{%
\begin{tabular}{|l|}
\hline
Top words by component on the gender subspace \\ \hline
motherhood, SSHRC, \\ microfinance, preschool, genocide, IFP, \\ CSE, intersectional, student, \\ homeschooling, photoshoot, \\ intersectionality, 920, breastfeeding, STEM, \\ photojournalistic, haiku, kindergarten, \\ FreeOnes, UNESCO, menstrual, \\ turbulence, NTR, ASME, HFN, ECE, IEEE, \\ feminism, noir, Jadavpur, Motherhood, \\ reportage, Contra, TU, WebSphere, \\ counsellor, photovoltaic, J2EE, \\ contraception, university, PEN, \\ masculinities, parenting, EAP, \\ Politecnico, Feminism, trauma, \\ Universiti, counselling, curriculum, \\ Kanpur, women, edits, Pune, Nanjing, \\ ethnographic, Pinterest, surrealist, \\ taught, Hindustan, students, CNRS, \\ Bangalore, Mumbai, consortium, tooth, \\ Vitae, Kindergarten, nanoscale, \\ school, ACL, scholarships, cloud, \\ Goa, NIJC, Montessori, JSPS, \\ scholarship, Neha, DAAD, endometriosis, \\ carrier, UCI, activism, Ambedkar, \\ EECS, semiconductor, scholar, \\ microfluidic, bikini, Raising, teacher, \\ Feminist, vinyasa, NBER, ethnography, \\ Twilight, Sunil, Shankar, viral, \\ earthquake, semiconductors, \\ historiography, vampire, HMO, PSU, bioenergy, \\ historian, Ravi, Breastfeeding, Raman, \\ resettlement, Shweta, ICTs, UNDP, NVIDIA, \\ HIV, Counselling, HEC, KDD, \\ Hyderabad, contraceptive, macro, \\ Ghaziabad, sexuality, CAS, \\ documentary, mic, biography, postdoc, \\ transnationalism, AMD, CFD, B.Tech, physicist, \\ LGBT, parenthood, HKU, HIP, \\ internationalization, M.Tech, BDS, acne, theorist, \\ HPV, Meerut, ageing, smile, \\ Rajesh, psychoeducational, PUNE, \\ grief, AHA, Essays, discourses, \\ secrets, Swati, EPFL, coaching, IIE, \\ Manoj, BIDMC, infertility, \\ fashion, Chicana, Vaishali, \\ Graduation, sociologist, Gender, EA, MIT, \\ teach, gift, IETF, NPPA, counselor, \\ JPL, gender, menopause, LGBTQ, \\ Waseda, perceptions, praxis, \\ birthday, Jawaharlal, fertility, \\ gendered, coverage, stills, PIH, \\ Balaji, Tagged, baking, USM, \\ postpartum, Goenka, Pooja, forgiveness \\
\hline

\end{tabular}
}
\caption{Words by gender norm.}
\label{table:top-gendered-rowspace}
\end{table}
\end{center}

 As before, we see a combination of inherently-gendered words (``motherhood", ``women", ``gender", ``masculinities"), socially-biased terms (``teacher", ``raising", ``semiconductors", ``B.Tech", ``IEEE", ``STEM", ``fashion") and other words whose connection to gender is less interpretable, and potentially represent spurious correlations (``trauma", ``Vitae", ``smile", ``920", ``forgiveness"). 

\end{document}

% --- supplement: appendix.tex ---

\maketitle

% \end{document}
% \appendix
\section{Appendix}
\label{appendix}

\subsection{Most Changed Words}

In Table 1 % \ref{table:word_change}
we show the words that changed the most in average over all professions.
In Figure 1 % \ref{fig:rel-change}
we present the relative change of biased word for each profession, compared to a random sample.
\begin{center}
\begin{table}[H]
\centering
\begin{tabular}{|l|}
\hline
Most Changed Words \\ \hline
ms., mr., his, her, he, she, mrs., specializes, \\
english, practices, ',', him, spanish, \\
speaks, with, affiliated, and, medicine, ms, \\
state, \#, the, medical, michael, in, \\
residency, at, of, psychology, dr., ’s, \\
law, research, practice, about, where, \\
business, education, 5, -, is, first, \\
women, america, insurance, more, john, \\
university, location, ph.d., surgery, (, \\
mental, ), that, engineering, graduated, \\
language, bs, litigation, collection, \\
united, 1, graduate, humana, cpas, \\
cancer, npi, completed, 10, book, hospital, c, \\
out, family, or, when, oklahoma, certified, \\
ohio, number, training, for, like, a, \\
than, be, nursing, ], \_, can, writing, \\
patients, no, orthopaedic, attorney, \\
over, ny, mr, “,    \\ \hline
\end{tabular}

\caption{Top 100 words influenced by INLP projection (BOW representation, biographies dataset).}
\label{table:word_change}
\end{table}
\end{center}
% \newpage
%\input{words-before-and-after.tex}

% \begin{table*}[t]

% \begin{center}
\begin{table}[H]
%\centering
\begin{tabular}{lll}
\toprule
         Word &                      Neighbors before &                            Neighbors after \\
\midrule
        order &            orders, ordering, purchase &                  orders, ordering, ordered \\
        crack &               keygen, cracks, torrent &                      keygen, cracks, warez \\
   craigslist &                  ebay, craiglist, ads &                 ebay, craiglist, freecycle \\
  populations &      population, species, communities &              population, species, habitats \\
         epub &                      ebook, mobi, pdf &                        mobi, ebook, kindle \\
       finals &    semifinals, playoffs, championship &       semifinals, semifinal, quarterfinals \\
    installed &     install, installing, installation &              install, installing, installs \\
 identifiable &       disclose, identify, identifying &             disclose, pii, distinguishable \\
  photographs &            photograph, photos, images &                 photograph, images, photos \\
           ta &                            si, tu, ti &                             que, bien, ele \\
        couch &                    sofa, sitting, bed &                    sofa, couches, loveseat \\
       cooler &              coolers, cooling, warmer &                   coolers, cooling, warmer \\
        becky &                  debbie, kathy, julie &                         debbie, steph, jen \\
  appreciated &           appreciate, greatly, thanks &                 appreciate, muchly, thanks \\
  negotiation &  negotiating, negotiations, mediation &       negotiating, negotiations, mediation \\
      initial &          subsequent, prior, following &                 intial, inital, subsequent \\
        chloe &                  chanel, emma, lauren &                    chloé, chanel, handbags \\
     filipino &          pinoy, filipinos, philippine &                  filipinos, pinoy, tagalog \\
      relying &                  rely, relied, relies &                       rely, relied, relies \\
    perpetual &       eternal, continual, irrevocable &          irrevocable, datejust, perpetuity \\
      himself &                     him, herself, his &                       herself, oneself, he \\
      seaside &         beach, beachside, picturesque &               beachside, idyllic, seafront \\
      measure &         measures, measuring, measured &              measures, measuring, measured \\
    yorkshire &      staffordshire, leeds, lancashire &           staffordshire, dales, lancashire \\
  merchandise &                 goods, items, apparel &                  goods, items, merchandize \\
          sub &                          subs, k, def &                          subs, subbed, svs \\
        tones &                     tone, hues, muted &                    tone, polyphonic, muted \\
    therapist &     therapists, psychologist, therapy &  therapists, physiotherapist, psychologist \\
       leaned &               sighed, smiled, glanced &                     leant, leaning, sighed \\
          tho &                        nnd, cuz, tlie &                            nnd, tlio, tlie \\
      lawyers &           attorneys, lawyer, attorney &                attorneys, lawyer, attorney \\
      compile &         compiling, compiler, compiles &              compiling, compiler, compiles \\
        chord &          chords, progressions, guitar &             chords, progressions, voicings \\
         aims &                    aim, aimed, aiming &                         aim, aimed, aiming \\
       ensure &             ensuring, assure, ensures &                  ensuring, ensures, assure \\
    aerospace &     aviation, engineering, automotive &        aeronautics, aviation, aeronautical \\
    clubhouse &           pool, playground, amenities &               clubhouses, pool, playground \\
      locking &                    lock, locks, latch &                         lock, locks, latch \\
        reign &               reigns, emperor, throne &                   reigns, reigned, emperor \\
   vulnerable &        susceptible, fragile, affected &            susceptible, vunerable, fragile \\
\bottomrule
\end{tabular}

\caption{3-nearest words before and after the INLP projection}
\label{table:nearest_words}
\end{table}
% \end{center}
% \newpage
% \end{table*}

\begin{figure}[]
    \centering
    \includegraphics[width=1.0\columnwidth]{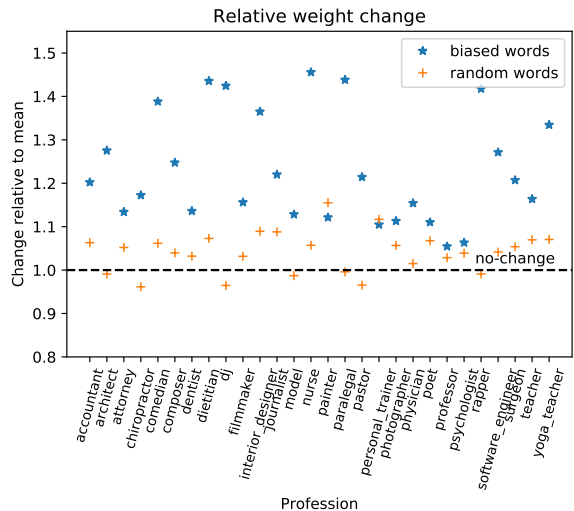}
    \caption{The relative change of biased vs. random words, per profession. }
    \label{fig:rel-change}
\end{figure}

\newpage

\subsection{Closest Glove Vectors Before and After the Projection}
\label{neighbors-examples}
In Table 2, %\ref{table:nearest_words},
we show the top 3-nearest neighbors of sampled words from GloVe, before and after the INLP process.

\newpage
\newpage

% \begin{figure*}[t!]
%     \centering
%     \includegraphics[scale=0.045]{plots/word2vec-debiasing-progress.png}
%     \caption{T-SNE projection of GLoVE vectors of the most biased words after 0, 5, 18 and 35 iterations of INLP.}
%     \label{fig:tsne}
    
% \end{figure*}